\documentclass[12pt]{article}
\large 
\usepackage[margin=0.8in]{geometry} 

\usepackage[utf8]{inputenc} 
\usepackage[T1]{fontenc}    
\usepackage{hyperref}       
\usepackage{url}            
\usepackage{booktabs}       
\usepackage{amsfonts}       
\usepackage{nicefrac}       
\usepackage{microtype}      
\usepackage{xcolor}         
\usepackage{amsmath}
\usepackage{enumitem}
\usepackage{pifont}
\usepackage{float}
\usepackage{graphicx}
\usepackage{subfigure}
\usepackage{subcaption}
\usepackage{graphics}
\usepackage{epsfig}
\usepackage[toc,page]{appendix}
\usepackage{booktabs,multirow}
\usepackage{amsmath,amssymb,mathtools} \usepackage{algorithm} \usepackage{algpseudocode} 

\usepackage{threeparttable} 
\usepackage{makecell}    

\usepackage{amsthm}
\usepackage{amssymb}

\usepackage{algorithm}
\usepackage{algpseudocode}

\allowdisplaybreaks

\newcommand{\algorithmicinitialize}{\textbf{Initialize:}}
\newcommand{\INITIALIZE}{\item[\algorithmicinitialize]}

\newtheorem{theorem}{\textbf {Theorem}}

\newtheorem{lemma}{\textbf {Lemma}}

\newtheorem{assumption}{\textbf {Assumption}}

\DeclareMathOperator*{\argmin}{arg\,min}

\DeclareMathOperator{\Orth}{Orth}

\begin{document}
\title{MALT: Lightweight Curvature-Aware Muon via Diagonal Preconditioning}

\author{%
Tongle Wu\textsuperscript{1}\thanks{Equal contribution.}
\quad
Huanyu Dong\textsuperscript{1}\footnotemark[1]
\quad
Ying Sun\textsuperscript{1}
\quad
Ziye Ma\textsuperscript{2}
\\[0.7em]
\parbox{0.90\textwidth}{%
\centering
\small
\textsuperscript{1}School of Electrical Engineering and Computer Science,\\
The Pennsylvania State University, University Park, PA, USA
\\[0.35em]
\textsuperscript{2}Department of Computer Science,\\
City University of Hong Kong, Hong Kong SAR, China
}%
}

\maketitle

\begin{abstract}
    
     Muon has recently emerged as a promising alternative to AdamW for language model pretraining by orthogonalizing momentum matrices using Newton–Schulz iterations. Although Muon mitigates gradient anisotropy, it does not explicitly account for the curvature geometry of the loss landscape and may therefore remain sensitive to curvature anisotropy. We bridge this gap by proposing \textbf{MALT}(\textbf{M}uon \textbf{A}ugmented by \textbf{L}ightweight \textbf{T}wo-sided preconditioning), which uses lightweight diagonal preconditioners to reduce the sensitivity of Muon to curvature anisotropy. Specifically, MALT uses two-sided diagonal preconditioners with low memory and computational overhead to approximately capture the curvature geometry of the loss landscape. It orthogonalizes the preconditioned momentum using Newton–Schulz iterations and maps the result back to define the update direction, while norm grafting controls the update magnitude. To improve the robustness of MALT to stochastic gradient noise, we further propose \textbf{MALTER}(\textbf{MALT} with {A}daptive st\textbf{E}psize \textbf{R}escaling). Convergence guarantees are provided for \textbf{MALT} in the stochastic non-convex setting. Experiments on GPT-2 Small, Medium, and Large pretraining show that the proposed methods outperform Muon while maintaining nearly the same memory footprint and wall-clock time.

\end{abstract}

\section{Introduction}
Recent progress in large language models (LLMs) has substantially accelerated the development of modern artificial intelligence \cite{bommasani2021foundation,zhao2023survey,minaee2024large}. Models such as GPT \cite{openai2026gpt55}, DeepSeek \cite{deepseekai2025deepseekv32}, LLaMA \cite{metaai2025llama4}, and Gemini \cite{geminiteam2025gemini25} have demonstrated strong capabilities across a wide range of tasks. However, scaling laws \cite{kaplan2020scaling} suggest that further performance gains of LLMs often require scaling up the model size, the amount of training data, and the training compute budget, which leads to substantial costs in training time, GPU memory \cite{rajbhandari2020zero}, communication \cite{narayanan2021efficient}, and energy consumption \cite{patterson2021carbon}. As a result, developing efficient optimizers becomes a critical component to reduce these training costs for large models \cite{ranganath2026navigating,liu2025muon}.

For decades, Adam \cite{kingma2015adam} and its variant AdamW \cite{loshchilov2019decoupled} have remained the default optimizers for training deep neural networks and LLMs. Adam provides elementwise adaptivity by maintaining a second moment estimate. This is equivalent to applying a diagonal preconditioner to the vectorized matrix parameter, neglecting correlations across rows and columns \cite{gupta2018shampoo}. Optimizers such as Shampoo \cite{gupta2018shampoo} and SOAP\cite{vyas2025soap} are subsequently proposed to address this limitation by constructing structure-aware preconditioners that exploit the matrix structure of the parameters, rather than relying on elementwise rescaling. For matrix parameter updates, the gradient or momentum matrix may have a highly unbalanced singular spectrum, causing the updates to be dominated by a few singular directions and slowing convergence \cite{lau2025polargrad,huang2026spectra}.  The recently proposed matrix optimizer Muon \cite{jordan2024muon} addresses this challenge by using Newton–Schulz iterations to approximately orthogonalize the momentum matrix, thereby flattening its singular spectrum and producing a nearly isotropic update direction.

However, Muon does not explicitly address curvature anisotropy of the loss landscape. Curvature anisotropy describes how sharp or flat the training loss landscape is across different directions, and is reflected by the condition number of the Hessian matrix of the loss function \cite{nocedal2006numerical,ghorbani2019hessian,sagun2018empirical}.  When the curvature anisotropy is high, using an update with the same magnitude may be too aggressive along sharp directions but too conservative along flat directions. High curvature anisotropy is known to slow down the convergence of first order optimization methods, whose rate depends directly on the condition number of the Hessian matrix \cite{nocedal2006numerical,ye2024preconditioning}. 
Figure~\ref{fig:compare-muon-precondition} (a) illustrates the effect of curvature anisotropy on Muon using a toy example. We consider solving the matrix quadratic regression problem $ \min \limits_{\boldsymbol X} f(\boldsymbol X) := \frac{1}{2}\left\|\boldsymbol A \boldsymbol X \boldsymbol B-\boldsymbol C \right\|_F^2 $ under a warmup-stable-decay (WSD) learning rate scheduling, one that is widely adopted in training modern language models  \cite{ibrahim2024simple,wen2024understanding}. We observe that as the Hessian condition number of $f\left( \boldsymbol X\right)$ increases, the convergence of Muon becomes substantially slower.
\begin{figure}[htbp]
\centering
\subfigure[Muon]{
\begin{minipage}[t]{0.5\linewidth}
\centering
\includegraphics[width=3.5in]{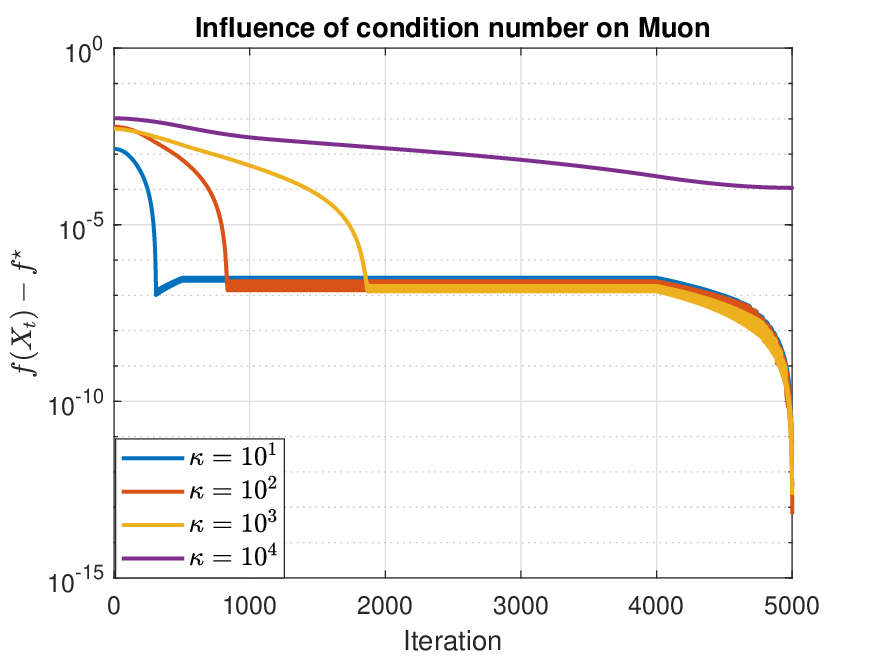}
\label{compare-1}
\end{minipage}%
}%
\subfigure[Preconditioned Muon]{
\begin{minipage}[t]{0.5\linewidth}
\centering
\includegraphics[width=3.5in]{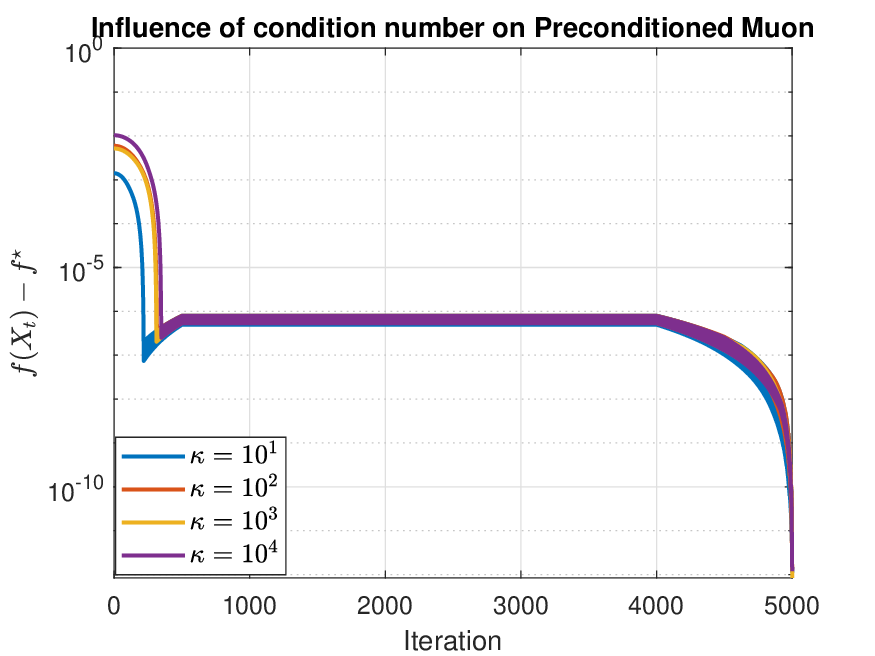}
\label{compare-2}
\end{minipage}%
}%
\centering
\caption{Comparison of Muon with and without curvature-aware preconditioning under the same WSD learning-rate schedule. Detailed setting for this simulation is provided in Appendix~\ref{app:simulation-setting}.}
    \label{fig:compare-muon-precondition}
\end{figure}

This sensitivity of Muon to curvature anisotropy motivates us to propose a new optimizer MALT to incorporate curvature-aware preconditioning into Muon. As demonstrated in Figure~\ref{fig:compare-muon-precondition} (b), curvature-aware preconditioned Muon is less affected by the condition number and converges substantially faster for is ill-conditioned loss. Our main contributions are summarized as follows:
\begin{itemize}

    \item We develop a formulation of Muon in a preconditioned space based on a local quadratic approximation of the loss. A positive definite curvature surrogate induces a transformed space in which the quadratic term becomes isotropic. Motivated by this formulation, MALT orthogonalizes the preconditioned momentum and maps the resulting direction back to the original parameter space. This formulation allows preconditioning and orthogonalization to address curvature anisotropy and gradient anisotropy simultaneously. 

    \item We develop a lightweight diagonal preconditioning scheme for Muon. It maintains exponential moving averages of row-wise and column-wise squared gradient norms to construct left and right diagonal preconditioners, with the goal of preserving matrix structure without introducing excessive extra memory and computational cost. To stabilize the update scale, MALT further uses norm grafting to rescale update magnitude. In addition to proposing MALT, we establish its convergence guarantees for stochastic nonconvex optimization. 
    
   \item To make MALT more robust to stochastic gradient noise, we  propose an adaptive variant of it named MALTER. MALTER introduces a single scalar stepsize for the grafted update direction, which is adaptively adjusted using norm based Adam-type noise adaptation in preconditioned space. Compared to MALT, MALTER shows improved empirical performance with negligible extra memory and computation overhead.

   \item We evaluate MALT and MALTER on pretraining GPT-2 Small, Medium, and Large models. Both methods consistently outperform Muon while costing nearly the same memory footprint and wall-clock training time. Among all compared optimizers, MALTER achieves the best overall performance. The performance gain of MALTER over Muon is comparable to that of Muon over AdamW.
   This shows that noise adaptive stepsize control and preconditioned orthogonalization are both important and complementary mechanisms for optimizer design.

\end{itemize}

\section{Related Work}

\noindent \textbf{Muon and its variants.}
Muon has recently attracted attention as a matrix optimizer that mitigates gradient anisotropy through Newton–Schulz orthogonalization. Existing variants of Muon mainly improve it from three aspects: how orthogonalization is
computed, how the orthogonalized direction is scaled, and how its
magnitude is distributed across rows or columns.

The Newton–Schulz iteration introduces several matrix multiplications for each Muon update, which can become costly in large scale training. The first line of work focuses on reducing the computational cost of orthogonalization. Existing approaches reduce this overhead through Newton–Schulz implementations optimized for modern hardware \cite{zhang2026gram}, more efficient approximation schemes for the polar factor \cite{grishina2025accelerating,amsel2025polar}, or low-rank parametrization \cite{refael2026sumo,he2025low}. 

The second line of work focuses on controlling the scale of Muon updates. Since orthogonalization flattens the singular spectrum, the choice of update magnitude is particularly important for Muon. Some recent studies interpret Muon as a steepest descent method under a spectral norm constraint and suggest that the stable step size depends on the nuclear norm of the momentum \cite{lau2025polargrad,nguyen2026spectral}. In addition, several variants combine the orthogonalization used in Muon with adaptive scaling based on RMSProp, AdaGrad, Adam, or trust regions to improve training efficiency and reduce sensitivity to stochastic gradient noise \cite{si2025adamuon,song2026decoupling,li2026variance,zhang2026namo,liu2026muon2,cheng2026trasmuon}.



Another line of work exploits structure across the rows and columns of Muon updates. Several studies observe that Muon updates can exhibit substantial norm imbalance across rows and columns  \cite{li2025normuon,zhang2026muonplus}, which may cause some neurons to receive persistently small updates and eventually contribute little to the network output 
\cite{aurora2026}. These methods normalize individual rows or columns after Newton-Schulz orthogonalization to improve the performance of Muon. Other methods either decompose each weight matrix into row magnitudes and directions \cite{lion2026muown}, or replace Newton-Schulz orthogonalization with normalization applied separately to each row \cite{deng2026rmnp,yuan2026nora}. 
MuonEq \cite{chang2026muoneq} is most closely related to our method. It equilibrates the momentum before orthogonalization, thereby reducing its condition number and improving the numerical accuracy of Newton-Schulz. However, MuonEq focuses on improving the conditioning of the momentum matrix and does not explicitly account for curvature anisotropy in the loss landscape.

\noindent \textbf{Preconditioned methods for mitigating curvature anisotropy.} The classical Newton method uses the Hessian matrix of the loss to precondition gradients and achieves faster convergence by alleviating the second order curvature anisotropy \cite{nesterov2018lectures}. However, for large scale problems, directly using the Hessian matrix is infeasible due to the prohibitive cost of computing, storing, and inverting this matrix \cite{lau2025polargrad}. This bottleneck has motivated the development of a broad range of practical alternatives that approximate the curvature information at reduced costs. For example, adaptive gradient methods such as AdaGrad \cite{duchi2011adaptive}, RMSProp \cite{tieleman2012rmsprop}, and Adam \cite{kingma2015adam} use elementwise gradient second moments to construct inexpensive diagonal preconditioners to rescale the gradients element-wisely. Memory-efficient variants of adaptive gradient methods such as Adafactor \cite{shazeer2018adafactor}, SM3 \cite{anil2019memory}, and Adam-mini \cite{zhang2025adammini} further reduce the cost of storing optimizer states by factorizing, compressing, or block-sharing these second moment of gradient. Other optimizers construct more direct approximations of curvature. For example, AdaHessian \cite{yao2021adahessian} and Sophia \cite{liu2024sophia} use estimates of the Hessian diagonal as preconditioners, whereas K-FAC \cite{martens2015optimizing} approximates the Fisher information matrix of each layer using Kronecker products. Matrix aware optimizers such as Shampoo \cite{gupta2018shampoo}, CASPR \cite{duvvuri2024caspr}, and SOAP \cite{vyas2025soap} exploit the matrix or
tensor structure of parameters by constructing structured preconditioners along rows and columns.
Nevertheless, anisotropy in matrix gradients
\cite{lau2025polargrad} remains underexplored in these methods.



\noindent \textbf{Most relevant work.} Concurrent works FISMO \cite{xu2026fismo} and Mousse \cite{zhang2026mousse} also proposed curvature-aware Muon variants. In particular, FISMO incorporates anisotropic geometry through a Fisher metric with a Kronecker factorization and formulates the orthogonalized update as a trust region problem under this Fisher induced metric. Mousse instead uses gradient statistics with a Kronecker structure, as in Shampoo, to construct a whitened space in which Muon style polar updates are performed. Newton–Muon \cite{du2026newton} is also closely related to our work, as it derives a curvature-aware Muon update from a Newton type surrogate and introduces a right preconditioner based on the second moment of the layer inputs.

Unlike these methods, MALT uses diagonal preconditioners on both sides with low computational and memory overhead and updates them at every optimization step. This preconditioner preserves the row and column structure of matrix parameters while avoiding the storage of dense curvature factors and costly matrix operations such as eigendecomposition and matrix inversion.


\section{Diagonally Preconditioned Muon}

\subsection{Motivation}
We consider minimizing the following unconstrained optimization problem:
\begin{align}\label{loss-fun}
\min_{\boldsymbol X \in \mathbb R^{m \times n}} f(\boldsymbol X),
\end{align}
where $f: \mathbb R^{m \times n} \to \mathbb R$ is a differentiable training loss function and $\boldsymbol X \in \mathbb R^{m \times n}$ is a matrix parameter.
Muon \cite{jordan2024muon} without momentum with exact orthogonalization updates $\boldsymbol X $ as
\begin{align}\label{muon-update}
\boldsymbol X_t = \boldsymbol X_{t-1} - \eta_t \operatorname{Orth} \left( \nabla f\left( \boldsymbol X_{t-1} \right) \right),
\end{align}
where $ \operatorname{Orth}\left( \boldsymbol X\right): = \boldsymbol U \boldsymbol V^\top $ denotes the orthogonalization of $\boldsymbol X \in \mathbb R^{m \times n}$ with $\boldsymbol X = \boldsymbol U\boldsymbol S \boldsymbol V^\top$ being its reduced singular value decomposition (SVD). This orthogonalization flattens the singular spectrum of the gradient $\nabla f(\boldsymbol X_{t-1})$ by replacing its nonzero singular values with ones, 
thereby reducing gradient anisotropy. To reduce the cost in training large models, it is often approximately computed using the Newton--Schulz iterations.

The Muon update \eqref{muon-update} does not explicitly incorporate curvature information of the loss landscape, which can make Muon sensitive to curvature anisotropy as demonstrated in Figure \ref{compare-1}. Nevertheless, the ill-conditioned Hessian that captures curvature anisotropy is widely observed in training neural networks \cite{sagun2016eigenvalues,sagun2017empirical,ben2024high} and language models \cite{shen2025convergence,zhang2024transformers,tang2025overlooked}.   


\begin{algorithm}[t]
\caption{MALT Update}
\label{algo-1}
\begin{algorithmic}[1]
\Require learning rate $\eta$, momentum $\mu_1,\mu_2 \in [0,1)$, batch size $B$ and damping constant $\epsilon$.
\INITIALIZE $\boldsymbol W_0 \in \mathbb R^{m\times n},  \boldsymbol M_0=\boldsymbol 0_{m\times n}$, $\boldsymbol l_0=\boldsymbol 0_m, \boldsymbol r_0 = \boldsymbol 0_n$.

\For{$t=1,\ldots,T$} 
\State Sample a minibatch $ \left| \mathcal B_t \right| = B$ and calculate stochastic gradient $\boldsymbol G_t = \frac 1 B \sum_{i \in \mathcal B_t }\nabla f\left( \boldsymbol W_{t-1};\xi_{i}\right)$
\State $\boldsymbol M_t \gets \mu_1 \boldsymbol M_{t-1}+(1-\mu_1) \boldsymbol G_t$
\State $\boldsymbol l_t \gets \mu_2 \boldsymbol l_{t-1}+(1-\mu_2)\left[ \left\| \boldsymbol G_t\left[1,:\right]  \right\|_F^2;\cdots; \left\| \boldsymbol G_t\left[m,:\right]  \right\|_F^2 \right]$
\State \State $\boldsymbol r_t \gets \mu_2 \boldsymbol r_{t-1}+(1-\mu_2)\left[ \left\| \boldsymbol G_t\left[:,1\right]  \right\|_F^2;\cdots; \left\| \boldsymbol G_t\left[:,n\right]  \right\|_F^2 \right]$
\State $\boldsymbol L_t \gets \operatorname{Diag}\big((\boldsymbol l_t+\epsilon)^{-1/8}\big)$, 
       $\boldsymbol R_t \gets \operatorname{Diag}\big((\boldsymbol r_t+\epsilon)^{-1/8}\big)$
\State $\widetilde {\boldsymbol M}_t \gets \boldsymbol L_t \boldsymbol M_t \boldsymbol R_t$
\State $\boldsymbol O_t \gets \Orth(\widetilde {\boldsymbol M}_t)$
\State $\boldsymbol D_t \gets \boldsymbol L_t \boldsymbol O_t \boldsymbol R_t$
\State $\boldsymbol W_{t}\gets \boldsymbol W_{t-1} -\eta \frac{\|\boldsymbol O_t\|_F}{\|\boldsymbol D_t\|_F+\epsilon}\boldsymbol D_t$
\EndFor

\State \Return $\boldsymbol W_T$
\end{algorithmic}
\end{algorithm}

\subsection {Muon in Preconditioned Space}
This motivates us to consider a local quadratic approximation of the loss function to encode curvature anisotropy. Unlike the use of the Hessian to capture second order information for strongly convex functions in the classical optimization literature, the Hessian of a nonconvex loss in deep learning or language models is not guaranteed to be positive semidefinite and may even be indefinite \cite{dauphin2014identifying,ghorbani2019hessian}. Consequently, directly minimizing a Hessian-based quadratic model may not yield a descent direction.

Moreover, explicitly forming, storing, or inverting the Hessian is prohibitively expensive in large-scale training \cite{lau2025polargrad}. Therefore, instead of using the exact Hessian, practical curvature-aware optimizers \cite{duchi2011adaptive,amari1998natural,martens2015optimizing,schraudolph2002fast,martens2010deep,gupta2018shampoo} often rely on tractable surrogates to approximately capture the curvature geometry of loss and have achieved effective empirical performance in training large scale models.


Therefore, it is reasonable to introduce a curvature-aware preconditioner
$
\mathcal B_t:\mathbb R^{m\times n}\rightarrow \mathbb R^{m\times n},
$
which serves as a tractable surrogate for the local curvature geometry around the current iterate \(\boldsymbol X_t\). 
With this curvature-aware surrogate, the local quadratic approximation of the training loss
\(f:\mathbb R^{m\times n}\rightarrow \mathbb R\) at
\(\boldsymbol X_t\) can be formulated as
\begin{align}
f\left(\boldsymbol X_t+\eta_t\boldsymbol\Delta\right)
\approx
f(\boldsymbol X_t)
+
\eta_t
\left\langle
\nabla f(\boldsymbol X_t),\boldsymbol\Delta
\right\rangle
+
\frac{\eta_t^2}{2}
\left\langle
\boldsymbol\Delta,\mathcal B_t(\boldsymbol\Delta)
\right\rangle .
\label{eq:second_approx_surrogate}
\end{align}
Here, \(\eta_t\) is the step size and
\(\boldsymbol\Delta\in\mathbb R^{m\times n}\) is the update direction. The corresponding curvature-aware update direction can be obtained by
minimizing \eqref{eq:second_approx_surrogate} with respect to \(\boldsymbol\Delta\):
\begin{align}
\boldsymbol\Delta_t
:=
\argmin_{\boldsymbol\Delta\in\mathbb R^{m\times n}}
\left\langle
\nabla f(\boldsymbol X_t),\boldsymbol\Delta
\right\rangle
+
\frac{\eta_t}{2}
\left\langle
\boldsymbol\Delta,\mathcal B_t(\boldsymbol\Delta)
\right\rangle .
\label{eq:update_dir_surrogate}
\end{align}
To understand the curvature-aware geometry characterized by \(\mathcal B_t\), we define the preconditioned 
update direction as
\begin{align}\label{curva-map}
\widetilde{\boldsymbol\Delta}
:=
\mathcal B_t^{1/2}(\boldsymbol\Delta),    
\end{align}
where \(\mathcal B_t^{1/2}\) denotes the positive definite square root of
\(\mathcal B_t\). Substituting the preconditioned variable into the quadratic term of \eqref{eq:update_dir_surrogate} yields
\[
\left\langle
\boldsymbol\Delta,\mathcal B_t(\boldsymbol\Delta)
\right\rangle
=
\left\|
\widetilde{\boldsymbol\Delta}
\right\|_F^2.
\]
Therefore, in the preconditioned space induced by
\(\mathcal B_t^{1/2}\), the quadratic term in above becomes isotropic. Equivalently, the anisotropic geometry in the original Euclidean space is transformed into an approximately isotropic geometry in the preconditioned space. Thus, the local model in
\eqref{eq:second_approx_surrogate} can be regarded as solving an optimization problem in the preconditioned space as
\begin{align}
\hat{\boldsymbol \Delta}_t: & = \argmin_{\widetilde{\boldsymbol \Delta} \in \mathbb R^{m\times n} }
\left\langle
\mathcal B_t^{-1/2}
\left(
\nabla f(\boldsymbol X_t)
\right),
\widetilde{\boldsymbol\Delta}
\right\rangle
+
\frac{\eta_t}{2}
\left\|
\widetilde{\boldsymbol\Delta}
\right\|_F^2 ,
\label{eq:whitened_local_model}
\end{align}
where the preconditioned gradient $ \mathcal B_t^{-1/2}
\left(
\nabla f(\boldsymbol X_t)
\right) $ is the gradient in the preconditioned space. The closed form solution of \eqref{eq:whitened_local_model} is $ \hat{\boldsymbol \Delta}_t: = -\frac{1}{\eta_t}\mathcal B_t^{-1/2}
\left(
\nabla f(\boldsymbol X_t)
\right)   $, which can be regarded as the gradient descent step in preconditioned space with step size $\eta_t$.

Although preconditioning can alleviate the curvature anisotropy, the direct gradient descent in preconditioned space has not explicitly considered the gradient anisotropy of $\mathcal B_t^{-1/2}
\left(
\nabla f(\boldsymbol X_t)
\right)$.  To further reduce gradient anisotropy in the preconditioned space, we apply idea of orthogonalization in Muon to the preconditioned gradient as
\begin{align}
\widetilde{\boldsymbol \Delta}_t:  = \operatorname{Orth} \left( \mathcal B_t^{-1/2}
\left(
\nabla f(\boldsymbol X_t)
\right) \right).
\end{align}

Finally, we can map the above updated direction in preconditioned space back to the original space based on \eqref{curva-map} to obtain the final update direction as
\begin{align}\label{final-update}
\boldsymbol{\Delta}_t = \mathcal B_t^{-1/2}\left(  \widetilde{\boldsymbol \Delta}_t  \right) =  \mathcal B_t^{-1/2}\left(   \operatorname{Orth} \left( \mathcal B_t^{-1/2}
\left(
\nabla f(\boldsymbol X_t)
\right) \right)   \right). 
\end{align}

In summary, preconditioned Muon first applies curvature-aware preconditioning to reduce curvature anisotropy, then orthogonalizes the preconditioned gradient in the preconditioned space to reduce gradient anisotropy, and finally maps the resulting direction back to the original parameter space to obtain the final update direction.

\subsection{Lightweight Diagonal Preconditioner}
Many existing optimizers can be viewed as using a preconditioning operator \(\boldsymbol{\mathcal B}: \mathbb R^{m \times n} \rightarrow \mathbb R^{m \times n}\) to incorporate curvature information. Examples include diagonal preconditioners built from elementwise gradient second moments \cite{kingma2015adam,loshchilov2019decoupled},
preconditioners based on Fisher information
\cite{amari1998natural,martens2015optimizing},
preconditioners based on Gauss–Newton matrices
\cite{martens2010deep,botev2017practical}, and structured preconditioners built from Kronecker factors approximation \cite{gupta2018shampoo,vyas2025soap}.
These methods differ in the trade-off between curvature expressiveness, computational and memory efficiency. Elementwise adaptive methods have low computational costs, but they rescale each parameter entry independently and do not explicitly use the row and column structure of matrix parameters \cite{gupta2018shampoo}. In contrast, Fisher, Gauss–Newton, and Shampoo methods capture richer curvature or matrix structure, but usually require dense blocks, Kronecker factors, or separate preconditioners for different tensor modes, together with matrix operations such as inversion or eigendecomposition.

We therefore use a pair of diagonal preconditioners, one on each side of the matrix. This choice preserves the row and column structure of matrix parameters without storing dense curvature factors or performing expensive matrix operations. Given the gradient
\(\boldsymbol G_t \in \mathbb R^{m \times n}\), we maintain exponential moving averages of the squared norms of its rows and columns:
\begin{align}
(\boldsymbol l_t)_i
&= \mu_2(\boldsymbol l_{t-1})_i
+ (1-\mu_2)\left\|(\boldsymbol G_t)_{i:}\right\|_2^2,
\quad \forall i \in [m],
\nonumber \\
(\boldsymbol r_t)_j
&= \mu_2(\boldsymbol r_{t-1})_j
+ (1-\mu_2)\left\|(\boldsymbol G_t)_{:j}\right\|_2^2,
\quad \forall j \in [n],
\nonumber \\
\boldsymbol P_t
&= \operatorname{Diag}\left((\boldsymbol l_t+\epsilon)^{-1/8}\right),
\quad
\boldsymbol Q_t
= \operatorname{Diag}\left((\boldsymbol r_t+\epsilon)^{-1/8}\right),
\label{eq:dp_preconditioner}
\end{align}
where \(\epsilon>0\) is a small damping constant for numerical stability. We then define the preconditioning operator by applying $\boldsymbol P_t$ and $\boldsymbol Q_t$ on the left and right as
\begin{equation}
\boldsymbol{\mathcal B}_t(\boldsymbol G)
:= \boldsymbol P_t \boldsymbol G \boldsymbol Q_t,
\quad \forall \boldsymbol G \in \mathbb R^{m \times n}.
\label{eq:two_sided_diagonal_preconditioner}
\end{equation}
Integrating the above preconditioner into Muon as \eqref{final-update} will obtain complete MALT update in Algorithm \ref{algo-1}. The motivation for this preconditioner is fourfold.

\begin{itemize}
    \item \textbf{Curvature-aware adaptive scaling.} The vectors \(\boldsymbol l_t\) and \(\boldsymbol r_t\) track EMAs of squared gradient norms across rows and columns , i.e., the diagonal parts of \(\boldsymbol G_t\boldsymbol G_t^\top\) and \(\boldsymbol G_t^\top\boldsymbol G_t\) over time. 
    Although they do not estimate the Hessian, they provide a lightweight approximation for curvature geometry along row and column directions of loss. The intuition is that large  
    entries in \(\boldsymbol l_t\) or \(\boldsymbol r_t\) suggest stronger local sensitivity or variation of the loss along the corresponding rows or column directions. \(\boldsymbol P_t\) 
    and \(\boldsymbol Q_t\) then suppress updates along such rows or columns and allow relatively larger updates along rows or columns with smaller accumulated squared gradient norm, yielding a lightweight curvature-aware scaling mechanism.

    \item \textbf{Matrix structure awareness.}
    Elementwise adaptive methods assign a separate scaling factor to every
parameter entry. In contrast, MALT scales a matrix
$\boldsymbol Z\in\mathbb R^{m\times n}$ according to
$
  \left(\boldsymbol P_t\boldsymbol Z\boldsymbol Q_t\right)_{ij}
  =
  \left(\boldsymbol P_t\right)_{ii}
  \boldsymbol Z_{ij}
  \left(\boldsymbol Q_t\right)_{jj}.$
Thus, all entries in the same row share the factor
$\left(\boldsymbol P_t\right)_{ii}$, while all entries in the same column
share the factor $\left(\boldsymbol Q_t\right)_{jj}$.
Each entry is therefore scaled using information from both its row and its
column, rather than from that entry alone.
    \item \textbf{Memory efficiency.}
    For each matrix parameter in \(\mathbb R^{m \times n}\), MALT only stores
    two vectors, \(\boldsymbol l_t \in \mathbb R^m\) and
    \(\boldsymbol r_t \in \mathbb R^n\), resulting in
    \(\mathcal O(m+n)\) additional optimizer state memory. This overhead is
    negligible compared with the \(\mathcal O(mn)\) memory required for the
    first order momentum and is substantially lower than storing dense matrix preconditioning factors.

    \item \textbf{Structured curvature approximation.} Existing studies \cite{zhang2024transformers,zhang2025adammini,dong2025towards} have shown that the Hessian of neural networks and Transformers often has a near block diagonal structure. This suggests that useful curvature information can be captured without forming a full dense Hessian. Motivated by this, MALT uses separable row and column gradient norms as a lightweight structured approximation to local curvature, reasonably providing a middle ground between the elementwise scaling of Adam and expensive dense matrix preconditioning.
\end{itemize}

\subsection{Relation to Existing Optimizers}
This section discusses the relation between the proposed MALT and existing optimizers.

\noindent\textbf{Relation to Adafactor.}
Adafactor \cite{shazeer2018adafactor} reduces memory usage by storing only the row and column sums of the moving average of squared gradients. It uses their normalized outer product to approximate the full elementwise second moment estimate. MALT shares a similar idea but uses the row and column sums for a different purpose. It constructs diagonal preconditioners on the left and right of the momentum matrix before orthogonalization, rather than using them to approximate the full elementwise second moment estimate.

\noindent\textbf{Relation to Shampoo.} Given the gradient $\boldsymbol G_t  \in \mathbb R^{m \times n}$, Shampoo  \cite{gupta2018shampoo} maintains dense accumulators for its rows and columns and defines the corresponding preconditioners as
\begin{align}\label{shampoo}
\boldsymbol L_t  &= \mu_2 \boldsymbol L_{t-1} + \left( 1 - \mu_2 \right) \boldsymbol G_t\boldsymbol G_t^\top \nonumber \\
\boldsymbol R_t &= \mu_2 \boldsymbol R_{t-1} + \left( 1- \mu_2 \right) \boldsymbol G_t^\top \boldsymbol G_t \nonumber \\
 \boldsymbol P_t & = \left( \boldsymbol L_t + \epsilon \boldsymbol I_m \right)^{-1/8}, \quad  \boldsymbol Q_t = \left( \boldsymbol R_t + \epsilon \boldsymbol I_n \right)^{-1/8} \nonumber \\
 \boldsymbol W_{t+1} & = \boldsymbol W_t - \eta_t \boldsymbol P_t \left(  \boldsymbol P_t \boldsymbol G_t \boldsymbol Q_t \right) \boldsymbol Q_t.
\end{align}
Thus, Shampoo essentially firstly takes the gradient descent step in preconditioned space under full dense matrix preconditioners $\boldsymbol P_t, \boldsymbol Q_t$ and then maps the update direction back to the original space. The preconditioners in \eqref{eq:dp_preconditioner} retain this left and right structure but replace the dense matrices of Shampoo with diagonal matrices. The inverse eighth power in \eqref{eq:dp_preconditioner} is adopted from the corresponding Shampoo factors in \eqref{shampoo}.


\noindent\textbf{Relation to SOAP.}
SOAP \cite{vyas2025soap} combines Shampoo preconditioning with Adam in the
eigenbasis of the Shampoo preconditioners. Let
\begin{equation}
    \boldsymbol L_t
    =
    \boldsymbol U_t \boldsymbol \Lambda_t \boldsymbol U_t^\top,
    \qquad
    \boldsymbol R_t
    =
    \boldsymbol V_t \boldsymbol \Gamma_t \boldsymbol V_t^\top,
    \qquad
    \widehat{\boldsymbol G}_t
    =
    \boldsymbol U_t^\top \boldsymbol G_t \boldsymbol V_t ,
\end{equation}
where
$\boldsymbol U_t\in\mathbb O^{m\times m}$
and
$\boldsymbol V_t\in\mathbb O^{n\times n}$
contain the eigenvectors of
$\boldsymbol L_t$ and $\boldsymbol R_t$.
SOAP applies Adam to the rotated gradient
$\widehat{\boldsymbol G}_t$ and maps the resulting direction back to the
original parameter space:
\begin{equation}
\begin{aligned}
    \widehat{\boldsymbol m}_t
    &=
    \beta_1\widehat{\boldsymbol m}_{t-1}
    +(1-\beta_1)\widehat{\boldsymbol G}_t,\\
    \widehat{\boldsymbol v}_t
    &=
    \beta_2\widehat{\boldsymbol v}_{t-1}
    +(1-\beta_2)
    \widehat{\boldsymbol G}_t\odot\widehat{\boldsymbol G}_t,\\
    \widehat{\boldsymbol \Delta}_t
    &=
    \frac{\widehat{\boldsymbol m}_t}
    {\sqrt{\widehat{\boldsymbol v}_t}+\epsilon},
    \qquad
    \boldsymbol W_{t+1}
    =
    \boldsymbol W_t
    -
    \eta_t
    \boldsymbol U_t
    \widehat{\boldsymbol \Delta}_t
    \boldsymbol V_t^\top .
\end{aligned}
\end{equation}
Hence, SOAP can be viewed as applying Adam in the eigenspace induced by the Shampoo preconditioners.

One may try to combine SOAP with Muon by applying orthogonalization in the
same eigenspace:
\begin{equation}
    \widehat{\boldsymbol G}_t
    =
    \boldsymbol U_t^\top\boldsymbol G_t\boldsymbol V_t,
    \qquad
    \widehat{\boldsymbol O}_t
    =
    \operatorname{Orth}(\widehat{\boldsymbol G}_t),
    \qquad
    \boldsymbol \Delta_t
    =
    \boldsymbol U_t
    \widehat{\boldsymbol O}_t
    \boldsymbol V_t^\top .
\end{equation}
This construction, however, gives exactly the same direction as applying
Muon in the original parameter space. The polar factor is equivariant under
orthogonal transformations:
\begin{equation}
    \operatorname{Orth}
    \left(
    \boldsymbol U_t^\top
    \boldsymbol G_t
    \boldsymbol V_t
    \right)
    =
    \boldsymbol U_t^\top
    \operatorname{Orth}(\boldsymbol G_t)
    \boldsymbol V_t .
\end{equation}
It follows that
\begin{equation}
\begin{aligned}
    \boldsymbol \Delta_t
    &=
    \boldsymbol U_t
    \operatorname{Orth}
    \left(
    \boldsymbol U_t^\top
    \boldsymbol G_t
    \boldsymbol V_t
    \right)
    \boldsymbol V_t^\top \\
    &=
    \operatorname{Orth}(\boldsymbol G_t).
\end{aligned}
\end{equation}
Thus, rotating the gradient into the Shampoo eigenbasis before
orthogonalization does not change the Muon direction. SOAP behaves
differently because Adam is not invariant under orthogonal transformation: its elementwise second moment depends on the basis in which it
is computed.

\noindent\textbf{Relation to Mousse and FISMO.}
Independent works Mousse \cite{zhang2026mousse} and FISMO \cite{xu2026fismo} are the most closely related curvature-aware Muon variants.
They share the same goal as MALT:  introducing preconditioning to address curvature anisotropy issue of vanilla Muon. There are three main key differences between these two works and ours. 
\begin{enumerate}

\item FISMO and Mousse use denser representations of curvature.
FISMO describes the local geometry using a Fisher metric with a Kronecker
structure, whereas Mousse uses dense Shampoo factors to construct a
preconditioned space. For a matrix parameter in
$\mathbb R^{m\times n}$, both methods maintain matrices on the left and
right, requiring $\mathcal O(m^2+n^2)$ additional optimizer state.

\item These dense factors also require additionally expensive matrix computations.
FISMO computes matrix inverses or inverse roots to obtain its left and right
preconditioners, while Mousse periodically performs eigendecompositions of
the Shampoo factors.

\item Mousse updates the eigenspaces of its preconditioner periodically
rather than at every step. Between two consecutive updates, the same
preconditioner are reused. The preconditioner may therefore respond with a
delay when the local curvature changes rapidly during training, which can result in unstable training as found in existing work \cite{frans2026stable}. 

\end{enumerate}

In contrast, MALT uses diagonal preconditioners on the left and right and
updates them at every step. This design may capture less curvature information
than dense preconditioning, but requires substantially less memory and
computation. MALT therefore remains close to Muon in cost while still
adapting the update to curvature.

\subsection{ Norm Grafting in MALT and Adaptive MALT }
In MALT, we obtain the update as $\boldsymbol D_t = \boldsymbol L_t \boldsymbol O_t \boldsymbol R_t$ in the 10th line of Algorithm \ref{algo-1}. The eighth square root of $\boldsymbol l_t, \boldsymbol R_t$ can make the update too aggressive if some rows or columns have very small norms and the damping parameter $\epsilon$ is too small. Thus, when we try to use $\boldsymbol D_t$ as an update directly, we find that we have to use a very small learning rate to prevent divergence, which results in slow convergence. Thus, we use the idea of grafting \cite{agarwal2020disentangling,anil2020scalable} to decouple the update direction and update magnitude. 

 \begin{algorithm}[htbp]
\caption{Adaptive MALT (MALTER) Update}
\label{algo-2}
\begin{algorithmic}[1]
\Require learning rate $\eta$, momentum $\mu_1,\mu_2 \in [0,1)$, batch size $B$ and damping constant $\epsilon$.
\INITIALIZE  $\boldsymbol W_0\in\mathbb R^{m\times n}$, $\boldsymbol M_0=\boldsymbol 0_{m\times n}$, $\boldsymbol l_0=\boldsymbol 0_m$, $\boldsymbol r_0=\boldsymbol 0_n$, and $\nu_0=0$.

\For{$t=1,\ldots,T$}
\State Sample a minibatch $\mathcal B_t$ with $|\mathcal B_t|= B$ and compute the stochastic gradient $\boldsymbol G_t=\frac{1}B\sum_{i\in\mathcal B_t}\nabla f_i(\boldsymbol W_{t-1};\xi_i)$.

\State $\boldsymbol M_t\gets\mu_1\boldsymbol M_{t-1}+(1-\mu_1)\boldsymbol G_t$.

\State $\boldsymbol l_t\gets\mu_2\boldsymbol l_{t-1}+(1-\mu_2)\left[|\boldsymbol G_t[1,:]|_2^2;\cdots;|\boldsymbol G_t[m,:]|_2^2\right]$.

\State $\boldsymbol r_t\gets\mu_2\boldsymbol r_{t-1}+(1-\mu_2)\left[|\boldsymbol G_t[:,1]|_2^2;\cdots;|\boldsymbol G_t[:,n]|_2^2\right]$.

\State $\boldsymbol L_t\gets\operatorname{Diag}\left((\boldsymbol l_t+\epsilon)^{-1/8}\right)$ and $\boldsymbol R_t\gets\operatorname{Diag}\left((\boldsymbol r_t+\epsilon)^{-1/8}\right)$.

\State $\widetilde {\boldsymbol M}_t \gets \boldsymbol L_t \boldsymbol M_t \boldsymbol R_t$
\State $\boldsymbol O_t \gets \Orth(\widetilde {\boldsymbol M}_t)$

\State $\boldsymbol D_t \gets \boldsymbol L_t \boldsymbol O_t \boldsymbol R_t$

\State $v_t\gets\mu_2 v_{t-1}+(1-\mu_2) \left\|\boldsymbol L_t \boldsymbol G_t \boldsymbol R_t\right\|_F^2$.

\State $\alpha_t\gets\eta\frac{\sqrt{1-\mu_2^t}}{1-\mu_1^t+\epsilon}\frac{\left\|\widetilde{\boldsymbol M}_t\right\|_F}{\sqrt{\nu_t}+\epsilon}$.

\State $\boldsymbol W_{t}\gets \boldsymbol W_{t-1} -\eta \alpha_t \frac{\|\boldsymbol O_t\|_F}{\|\boldsymbol D_t\|_F+\epsilon}\boldsymbol D_t$
\EndFor

\State \Return $\boldsymbol W_T$.
\end{algorithmic}
\end{algorithm}

We normalize the preconditioned update $\boldsymbol D_t$ to obtain the update direction with unit Frobenius norm. For the updating magnitude and we utilize the Frobenius norm of the orthogonal update $\boldsymbol O_t$ as the grafting norm. This grafting aims to make the update magnitude match that of the orthogonal update in the preconditioned space.

But the update magnitude obtained by grafting in the 11th line of Algorithm \ref{algo-1} cannot make the update magnitude adapt to the stochastic gradient noise. Motivated by existing work \cite{frans2025really}, which has shown that noise adaptation is as important as the update direction for the optimizers. The optimizers with noise adaptation consistently outperform their counterparts without noise adaptation. For example, Adam is better than Signum \cite{bernstein2018signsgd}, SOAP is better than SPlus \cite{frans2026stable}, and AdaMuon \cite{si2025adamuon} is better than Muon. We propose the noise-adaptive version of MALT to enhance the performance of MALT as Algorithm \ref{algo-2}.

The noise-adaptive stepsize in MALTER follows the same norm-based Adam-type principle as NAMO \cite{zhang2026namo}, but it is applied in the preconditioned space induced by MALT. The key idea of NAMO is to keep orthogonal update direction of Muon, while using an Adam-type scalar to adjust its magnitude according to the noise level of stochastic gradients. MALTER adopts the same idea after diagonal precondition. Specifically, it forms the preconditioned momentum $\widetilde {\boldsymbol M}_t= \boldsymbol L_t \boldsymbol M_t \boldsymbol R_t$ and maintains a scalar estimate $\nu_t=\mu_2\nu_{t-1}+(1-\mu_2)\left\|\boldsymbol L_t \boldsymbol G_t \boldsymbol R_t\right\|_F^2$. The adaptive factor
\begin{align}
 \alpha_t = 
\frac{\sqrt{1-\mu_2^t}}{1-\mu_1^t+\epsilon}
\frac{\left\|\widetilde {\boldsymbol M}_t \right\|_F}{\sqrt{\nu_t}+\epsilon}  
\end{align}
therefore approximately measures the signal-to-noise ratio of stochastic preconditioned gradient. Thus, MALTER preserves Adam-style noise adaptation in NAMO, while adapting it to the curvature-aware precondition of MALT. This makes the update scale responsive to stochastic gradient fluctuations without discarding the preconditioned orthogonalization mechanism used to address curvature and gradient anisotropy. In addition, we can observe that the memory and computational overhead of MALTER is negligible compared with MALT and Muon.

\section{Convergence Analysis of MALT}
This section gives the convergence guarantee of MALT. Before providing the convergence result of MALT, we need the following standard assumptions for the loss function \eqref{loss-fun}.
\begin{assumption}\label{assum-1}(Lower boundedness). Loss function $f$ admits a finite lower bound, i.e.
\begin{align}
\inf_{\boldsymbol W \in \mathbb R^{m\times n}} f\left( \boldsymbol W\right): = f^\star >-\infty.
\end{align}

\end{assumption}
\begin{assumption}\label{assum-2}(L-smoothness)
 The loss function $f: \mathbb R^{m \times n} \rightarrow \mathbb R$ is differentiable and there exists a constant $L>0$ such that
 \begin{align}
f\left( \boldsymbol W_2\right) \leq f\left( \boldsymbol W_1\right) + \langle\nabla f \left( \boldsymbol W_1 \right) , \boldsymbol W_2 - \boldsymbol W_1\rangle + \frac{L}{2} \left\| \boldsymbol W_2 - \boldsymbol W_1 \right\|_F^2.
 \end{align}
\end{assumption}

\begin{assumption}\label{assum-3}(Bounded variance).
Let $\xi$ be a random variable that is independent of $\boldsymbol W \in \mathbb R^{m \times n}$, we assume $\nabla f \left( \boldsymbol W; \xi\right)$ is an unbiased stochastic estimator of true gradient $\nabla f\left( \boldsymbol W \right)$ and have a bounded variance, i.e.
\begin{align}
\mathbb E \left[  \nabla f \left( \boldsymbol W;\xi\right) \right] = \nabla f\left( \boldsymbol W \right),\quad \quad \mathbb E \left[ \left\|   \nabla f \left( \boldsymbol W; \xi\right) -  \nabla f\left( \boldsymbol W \right) \right\|_F^2 \right] \leq \sigma^2.
\end{align}
\end{assumption}

Based on the above assumptions, we have the convergence guarantee for the proposed MALT in the following theorem.
\begin{theorem}\label{thm-1}
Without loss of  generality, we assume $m\geq n$, if Assumptions \ref{assum-1}-\ref{assum-3} and following conditions
\begin{align}
\eta & = \frac{\epsilon^{\frac 5 8}}{n^{\frac 3 4}} \sqrt{ \frac{2\Delta_{0,1}\left(1- \mu_1 \right)}{5LT} },\quad 1-\mu_1= \min \left\{  \epsilon^{\frac 5 8}  \sqrt{ \frac{ 5 LB \Delta_{0,1}}{2T n^{\frac 1 2}\sigma^2}} , 1 \right\}
\end{align}
are satisfied, then $\forall \epsilon \in (0,1)$ the proposed MALT in Algorithm \ref{algo-1} has
\begin{align}\label{final-result}
\min_{t = 1,\cdots, T}    \min \left\{ \mathbb E \left[ \left\| \nabla f\left( \boldsymbol W_{t-1} \right) \right\|_F^{\frac 1 2}\right] , \frac{ \mathbb E^2 \left[ \left\| \nabla f\left( \boldsymbol W_{t-1} \right) \right\|_F^{\frac 1 2}\right]}{  1 + 2n^{\frac 1 4} \sqrt{\frac{\eta L}{1-\mu_2}} + \frac{\sqrt{\sigma}}{B^{\frac 1 4}}  }\right\}  & \leq 16 \epsilon^{-\frac{19}{16}} n^{\frac 7 8} \sqrt[4]{ \frac{L\Delta_{0,1}\sigma^2}{TB}} + 8 \epsilon^{-\frac{17}{8}} n ^{\frac 5 4} \frac{\Delta_{0,2} \sigma}  {\sqrt{TLB\Delta_{0,1}}} \nonumber \\
& \quad + 31 \epsilon^{-\frac 7 8} n ^{\frac 3 4} \sqrt{\frac{L \Delta_{0,1}}{T}}  + \frac{8 \epsilon^{-\frac 3 2} n \Delta_{0,2}}{T},
\end{align}
where $\Delta_{0,1}: = f\left(\boldsymbol W_0 \right) - f^\star, \Delta_{0,2}: = \left\| \nabla  f\left(\boldsymbol W_0 \right)\right\|_F$.
\end{theorem}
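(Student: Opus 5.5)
We follow the standard descent-lemma analysis for normalized/orthogonalized momentum methods, adapted to the two-sided diagonal preconditioning and the grafting factor. Write the MALT step as $\boldsymbol W_t=\boldsymbol W_{t-1}-\eta\,\gamma_t\boldsymbol D_t$ with $\gamma_t:=\|\boldsymbol O_t\|_F/(\|\boldsymbol D_t\|_F+\epsilon)$.

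\textbf{Step 1: spectral bounds on the preconditioners.} Since $\boldsymbol l_t,\boldsymbol r_t\ge 0$, every diagonal entry of $\boldsymbol L_t,\boldsymbol R_t$ lies in $(0,\epsilon^{-1/8}]$. For the lower bound we use $(\boldsymbol l_t)_i\le \sum_{s\le t}(1-\mu_2)\mu_2^{t-s}\|\boldsymbol G_s\|_F^2$, i.e.\ an EMA of squared stochastic gradient norms. Because $\|\boldsymbol O_t\|_F=\sqrt{\operatorname{rank}}\le\sqrt n$, we get $\|\eta\gamma_t\boldsymbol D_t\|_F\le\eta\sqrt n$ deterministically. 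This gives $\|\boldsymbol W_t-\boldsymbol W_{t-1}\|_F\le\eta\sqrt n$.

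\textbf{Step 2: descent inequality.} By Assumption~\ref{assum-2},
\[
f(\boldsymbol W_t)\le f(\boldsymbol W_{t-1})-\eta\gamma_t\langle\nabla f(\boldsymbol W_{t-1}),\boldsymbol D_t\rangle+\tfrac{L}{2}\eta^2 n .
\]
We split $\nabla f=\boldsymbol M_t+(\nabla f-\boldsymbol M_t)$. The key identity is
\[
\langle \boldsymbol M_t,\boldsymbol L_t\boldsymbol O_t\boldsymbol R_t\rangle=\langle \widetilde{\boldsymbol M}_t,\boldsymbol O_t\rangle=\|\widetilde{\boldsymbol M}_t\|_*\ge\|\widetilde{\boldsymbol M}_t\|_F .
\]
For the error term we use $\langle \boldsymbol E,\boldsymbol D_t\rangle\le\|\boldsymbol E\|_F\|\boldsymbol D_t\|_F$ together with $\gamma_t\|\boldsymbol D_t\|_F\le\sqrt n$. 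This yields
\[
\eta\gamma_t\|\widetilde{\boldsymbol M}_t\|_*\le f(\boldsymbol W_{t-1})-f(\boldsymbol W_t)+\eta\sqrt n\|\nabla f(\boldsymbol W_{t-1})-\boldsymbol M_t\|_F+\tfrac L2\eta^2 n .
\]
We then lower bound $\gamma_t\ge \|\boldsymbol O_t\|_F/(\epsilon^{-1/4}\|\boldsymbol O_t\|_F+\epsilon)\gtrsim\epsilon^{1/4}$ (with $\|\boldsymbol O_t\|_F\ge1$ when $\boldsymbol M_t\neq0$). We also bound $\|\widetilde{\boldsymbol M}_t\|_F\ge \lambda_{\min}(\boldsymbol L_t)\lambda_{\min}(\boldsymbol R_t)\|\boldsymbol M_t\|_F$, where $\lambda_{\min}\ge(\bar g_t+\epsilon)^{-1/8}$ and $\bar g_t$ is the EMA of $\|\boldsymbol G_s\|_F^2$.

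\textbf{Step 3: momentum error.} Unrolling $\boldsymbol M_t-\nabla f(\boldsymbol W_{t-1})$ in the usual way gives a bias part of size $\sum_s\mu_1^{t-s}L\eta\sqrt n$, i.e.\ $O(\mu_1\eta L\sqrt n/(1-\mu_1))$, and a martingale noise part whose second moment is $(1-\mu_1)\sigma^2/B$. The initialization $\boldsymbol M_0=0$ contributes $\mu_1^t\Delta_{0,2}$, whose sum over $t$ is $\Delta_{0,2}/(1-\mu_1)$. Summing Step 2 over $t$ and telescoping $\Delta_{0,1}$ gives
\[
\frac1T\sum_t\mathbb E\Big[\frac{\|\boldsymbol M_t\|_F}{(\bar g_t+\epsilon)^{1/4}}\Big]\lesssim\epsilon^{-c}\Big(\frac{\Delta_{0,1}}{\eta T}+\sqrt n\,\sigma\sqrt{\tfrac{1-\mu_1}{B}}+\frac{\eta L n}{1-\mu_1}+\frac{\sqrt n\,\Delta_{0,2}}{T(1-\mu_1)}\Big).
\]
Inserting the prescribed $\eta$ and $1-\mu_1$ balances these terms into the four terms on the right-hand side of \eqref{final-result}.

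\textbf{Step 4 (main obstacle): removing the random denominator.} The denominator $(\bar g_t+\epsilon)^{1/4}$ is coupled with the gradient, which is why the result is stated for $\|\nabla f\|_F^{1/2}$ with a $\min\{\cdot,\cdot\}$ form. We first pass from $\boldsymbol M_t$ back to $\nabla f(\boldsymbol W_{t-1})$ using the error bound from Step 3. We then apply Cauchy--Schwarz/Hölder in the form
\[
\mathbb E\big[\|\nabla f\|^{1/2}\big]^2\le\mathbb E\Big[\frac{\|\nabla f\|}{(\bar g+\epsilon)^{1/4}}\Big]\,\mathbb E\big[(\bar g+\epsilon)^{1/4}\big].
\]
The remaining task is to bound $\mathbb E[(\bar g_t+\epsilon)^{1/4}]$ by roughly $\mathbb E\|\nabla f\|^{1/2}+\sqrt{\sigma}B^{-1/4}+2n^{1/4}\sqrt{\eta L/(1-\mu_2)}+1$. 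For this we use $\|\boldsymbol G_s\|\le\|\nabla f(\boldsymbol W_{t-1})\|+L\eta\sqrt n(t-s)+\text{noise}$; the geometric sum of the drift over the EMA weights produces the $\eta L/(1-\mu_2)$ term. Finally, a case split on whether $\mathbb E\|\nabla f\|^{1/2}$ dominates the other terms of that denominator gives the $\min$ in the statement. The delicate points are controlling these $\mu_2$-weighted drift sums and tracking the exact $\epsilon$ exponents.
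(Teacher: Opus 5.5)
Your proposal is correct and follows the paper's skeleton. Both use the descent inequality with the grafted step bounded by $\eta\sqrt n$, the polar identity $\langle\widetilde{\boldsymbol M}_t,\boldsymbol O_t\rangle=\|\widetilde{\boldsymbol M}_t\|_*$, and the zero-initialized momentum-error bound of Lemma~\ref{lemma-2}. Both then apply Cauchy--Schwarz to strip the random factor (your $\bar g_t$ is exactly $\|\boldsymbol l_t\|_1=\|\boldsymbol r_t\|_1$), use the EMA drift sum to get $2n^{1/4}\sqrt{\eta L/(1-\mu_2)}$, and finish with $x^2/(x+c)\ge\frac12\min\{x,x^2/c\}$.

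You differ only in the bookkeeping of the descent step. The paper splits inside the preconditioned space via $\|\boldsymbol L_t\boldsymbol M_t\boldsymbol R_t\|_*\ge\|\boldsymbol L_t\nabla f(\boldsymbol W_{t-1})\boldsymbol R_t\|_*-\|\boldsymbol L_t(\nabla f(\boldsymbol W_{t-1})-\boldsymbol M_t)\boldsymbol R_t\|_*$. It then pays $\|\cdot\|_*\le\sqrt n\|\cdot\|_F$, $1/(\|\boldsymbol D_t\|_F+\epsilon)\le\epsilon^{-1}$ and $\|\boldsymbol L_t\|\|\boldsymbol R_t\|\le\epsilon^{-1/4}$, so the momentum error enters each step with coefficient $2n\epsilon^{-5/4}\eta$. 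It also writes $\sqrt n$ for $\|\boldsymbol O_t\|_F$, which tacitly assumes full rank. You instead bound the cross term by $\gamma_t\|\boldsymbol D_t\|_F\le\|\boldsymbol O_t\|_F\le\sqrt n$ and keep $\|\widetilde{\boldsymbol M}_t\|_F$ as the descent quantity. You pass from $\boldsymbol M_t$ to $\nabla f(\boldsymbol W_{t-1})$ only after lower-bounding $\gamma_t\ge\epsilon^{1/4}/2$. The error then enters with coefficient $(\sqrt n+\tfrac12)\eta$, and rank deficiency is harmless.

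This difference matters for the constants. You leave them implicit, but carrying your coefficients through with the prescribed $\eta$ and $1-\mu_1$ gives a leading constant below $16$ (about $15.6$) in the case $1-\mu_1<1$. The remaining terms sit well inside the stated ones. With the paper's coefficients, that choice of $1-\mu_1$ exactly balances its two leading terms and yields $2\sqrt{32\sqrt{10}}\approx 20$ there. So the stated constant $16$ is supported by your ordering rather than the paper's.

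The paper's route has one small advantage: it puts the preconditioned true gradient $\|\boldsymbol L_t\nabla f(\boldsymbol W_{t-1})\boldsymbol R_t\|_F$ directly into the descent inequality, saving your separate conversion step.
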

Compared with standard Muon convergence bounds, the stationarity measure in Theorem \ref{thm-1} is different from the usual Euclidean gradient norm $\mathbb{E} \left[ \left\|\nabla f(\boldsymbol W_t)\right\|_F \right]$ \cite{nagashima2026improved,zhang2026namo} or nuclear-norm measure $\mathbb{E} \left[ \left\|\nabla f(\boldsymbol W_t)\right\|_* \right]$ \cite{shen2025convergence}. This difference comes from the stochastic diagonal preconditioners $\boldsymbol L_t$ and $\boldsymbol R_t$. In the proof, the descent lemma involves controlling the preconditioned gradient norm $\left\|\boldsymbol L_t\nabla f(\boldsymbol W_{t-1})\boldsymbol R_t\right\|_F$, rather than directly through $\left\|\nabla f(\boldsymbol W_{t-1})\right\|_F$, which introduces a random denominator in the lower bound of the preconditioned gradient norm as in \eqref{lower-bound}. 

When we seek a high-accuracy solution such that
$
\mathbb{E}\left[\left\|\nabla f\left(\boldsymbol W_{t-1}\right)\right\|_F^{1/2}\right]
$
is sufficiently small, the convergence metric is dominated by
$
\mathbb{E}^2\left[\left|\nabla f(\boldsymbol W_{t-1})\right\|_F^{1/2}\right].
$
This implies that the complexity of finding a point satisfying $
\mathbb E^2\left[
\left\|\nabla f\left(\boldsymbol W_{t-1}\right)\right\|_F^{1/2}
\right]\le \delta$
is $
\mathcal O\left(
\frac{n^{7/2}L\Delta_{0,1}\sigma^2}{B}\delta^{-4}
\right)$. This matches the \(\delta^{-4}\) accuracy dependence of existing stochastic
non-convex analyses of Muon
\cite{shen2025convergence,nagashima2026improved}, while incurring a worse
polynomial dependence on the matrix dimension \(n\). However, since $
\mathbb{E}^2\left[ \left\|\nabla f(\boldsymbol W_{t-1})\right|_F^{1/2}\right] \leq
\mathbb{E}\left[ \left\|\nabla f(\boldsymbol W_{t-1})\right|_F\right]$, this convergence metric indicates that our convergence guarantee is weaker than those in existing results.

\section{Experiments}
\label{sec:experiments}

This section presents the performance of our proposed optimizers on large language model (LLM) pretraining tasks

\subsection{Settings}
\label{subsec:settings}

\noindent\textbf{Models and Dataset.}
We evaluate MALT and the baselines on GPT-2 pretraining. The model implementation is based on NanoGPT~\cite{karpathy2022nanogpt}. In order to validate the scalability of our proposed optimizers, we consider three model sizes: GPT-2 Small (124M), GPT-2 Medium (355M), and GPT-2 Large (774M). All experiments are conducted on the OpenWebText dataset~\cite{gokaslan2019openwebtext}, which contains approximately 9B training tokens and 4.4M validation tokens.

\noindent\textbf{Baselines and Parameter Grouping.} We compare our proposed MALT and MALTER with Muon~\cite{jordan2024muon} and AdamW~\cite{kingma2015adam}. Since Muon-style updates are designed for matrix-valued parameters, we apply Muon, MALT, and MALTER to matrix parameters only, and use AdamW to optimize the vectors, scalars, embedding layers, and output head layers. For a fair comparison, this parameter grouping remains the same among Muon-style optimizers.

\noindent\textbf{Training Protocols.} All experiments use a global batch size of 480 and a block size of 1024, corresponding to 491{,}520 tokens per optimizer step. We evaluate training and validation losses every 500 steps by averaging over 200 mini-batches. 

For a fair comparison, we first perform an extensive grid search over learning rates, as is shown in Table~\ref{tab:lr_grid}. For each optimizer, we train the model for 10K steps and select the best learning rate according to the minimum validation loss. The optimal learning rates are reported in Table~\ref{tab:optimal_lr}. We use a linear warmup schedule for the first 10\% of training steps to reach the stable learning rate. 

In order to further find out the final convergent performance, we continue to train the models using the checkpoint of each model under the optimal learning rate. GPT-2 Small, Medium, and Large are trained for 50K, 10K, and 20K steps in total, respectively. 

\noindent\textbf{Hardware.}
All pretraining experiments are conducted on a cloud GPU cluster with 4 H100 GPUs, and a workstation with 2 PRO6000 GPUs. A subset of the learning-rate sweep experiments for GPT-2 Small and Medium is conducted on PRO6000 GPUs. Since these runs are used only for loss comparison under the same training protocol, hardware differences do not affect the reported validation-loss values. All wall-clock time and memory measurements are conducted on a single H100 GPU.

\begin{table}[htbp]
\centering
\renewcommand{\arraystretch}{1.2}
\begin{tabular}{lcccc} 
\toprule
\multirow{2}{*}{\textbf{Model Size}} & \multicolumn{4}{c}{\textbf{Optimal Learning Rate}} \\
\cmidrule(lr){2-5}
& \textbf{AdamW} & \textbf{Muon} & \textbf{MALT} & \textbf{MALTER} \\
\midrule
\textbf{Small} (124M)  & $0.0013$ & $0.0013$ & $0.0013$ & $0.012$ \\
\textbf{Medium} (355M) & $0.0009$ & $0.0009$ & $0.0009$ & $0.009$ \\
\textbf{Large} (774M)  & $0.0009$ & $0.0007$ & $0.0007$ & $0.006$ \\
\bottomrule
\end{tabular}
\vspace{0.3cm}
\caption{Optimal learning rates found via grid search for each model scale and optimizer. (See Table~\ref{tab:additional_settings} for the configuration of other hyperparameters.)}
\label{tab:optimal_lr}
\end{table}

\begin{table}[htbp]
\centering
\begin{tabular}{lll} 
\toprule
\textbf{Model Size} & \textbf{Optimizer} & \textbf{Learning Rate Grid} \\
\midrule

\multirow{2}{*}{\textbf{Small} (124M)}  
& AdamW, Muon, MALT & \{0.0005, 0.0009, 0.0013, 0.0018, 0.0025, 0.0032\} \\
& MALTER     & \{0.007, 0.009, 0.012, 0.015, 0.018, 0.025\} \\
\midrule

\multirow{2}{*}{\textbf{Medium} (355M)} 
& AdamW, Muon, MALT & \{0.0005, 0.0009, 0.0013, 0.0018, 0.0025, 0.0032\} \\
& MALTER     & \{0.005, 0.006, 0.007, 0.009, 0.013, 0.015\} \\
\midrule

\multirow{2}{*}{\textbf{Large} (774M)}  
& AdamW, Muon, MALT & \{0.0003, 0.0005, 0.0006, 0.0007, 0.0009, 0.0013\} \\
& MALTER     & \{0.003, 0.005, 0.006, 0.007, 0.009, 0.013\} \\

\bottomrule
\end{tabular}
\vspace{0.3cm}
\caption{Full learning rate search grids for all model scales. AdamW, Muon, and MALT share the same search space, while MALTER is tuned on a distinct grid.}
\label{tab:lr_grid}
\end{table}

\subsection{Results of GPT-2 Pretraining}
\label{subsec:results_pretraining}
\begin{table}[htbp]
\centering
\begin{tabular}{ll cccc}
\toprule
\multirow{2}{*}{\textbf{Model Size}} & \multirow{2}{*}{\textbf{Metric}} & \multicolumn{4}{c}{\textbf{Optimizers}} \\
\cmidrule(lr){3-6}
& & \textbf{AdamW} & \textbf{Muon} & \textbf{MALT} & \textbf{MALTER} \\
\midrule

\multirow{2}{*}{\textbf{Small} (124M)}  
& Train Loss & $3.0467$ & $3.0298$ & $\mathbf{3.0193}$ & $\mathbf{{3.0063}}$ \\
& Val Loss   & $3.0540$ & $3.0472$ & $\mathbf{3.0358}$ & $\mathbf{3.0231}$ \\
\midrule

\multirow{2}{*}{\textbf{Medium} (355M)} 
& Train Loss & $2.9851$ & $2.9599$ & $\mathbf{2.9490}$ & $\mathbf{2.9282}$ \\
& Val Loss   & $2.9971$ & $2.9719$ & $\mathbf{2.9611}$ & $\mathbf{2.9442}$ \\
\midrule

\multirow{2}{*}{\textbf{Large} (774M)}  
& Train Loss & $3.2404$ & $2.7604$ & $\mathbf{2.7480}$ & $\mathbf{2.7442}$ \\
& Val Loss   & $3.2705$ & $2.7848$ & $\mathbf{2.7723}$ & $\mathbf{2.7684}$ \\

\bottomrule
\end{tabular}
\vspace{0.3cm}
\caption{Final training and validation loss achieved by different optimizers across varying model scales. The maximum steps of small, medium and large models are 50K, 10K and 20K, respectively.}
\label{tab:final_loss}
\end{table}

\noindent\textbf{MALT and MALTER improve GPT-2 pretraining.}
 Table~\ref{tab:final_loss} reports the final training and validation losses under the selected learning rates. Muon already improves over AdamW on validation loss, reducing the loss by $0.0068$, $0.0252$, and $0.4857$ on GPT-2 Small, Medium, and Large, respectively. MALT further improves over Muon, with additional validation-loss reductions of $0.0114$, $0.0108$, and $0.0125$ across the three model scales. MALTER achieves the largest improvement, further reducing the validation loss over Muon by $0.0241$, $0.0277$, and $0.0164$, and obtains the best validation loss in all settings. These results show that the proposed methods consistently strengthen Muon on GPT-2 pretraining while preserving the advantage of Muon over AdamW.

Figures~\ref{fig:small_loss}, \ref{fig:medium_loss} and \ref{fig:large_loss} show that MALT and MALTER outperform Muon and AdamW throughout the pretraining process. On all three models, their training and validation loss curves remain below AdamW and Muon. This indicates that the proposed MALT and MALTER improve not only the final loss but also the iteration efficiency. In particular, MALT and MALTER reach lower loss values with fewer optimization steps. Among the compared methods, MALTER achieves the best overall performance.

\begin{figure}[htbp]
\centering
\subfigure[Training Loss]{
\begin{minipage}[t]{0.5\linewidth}
\centering
\includegraphics[width=3.5in]{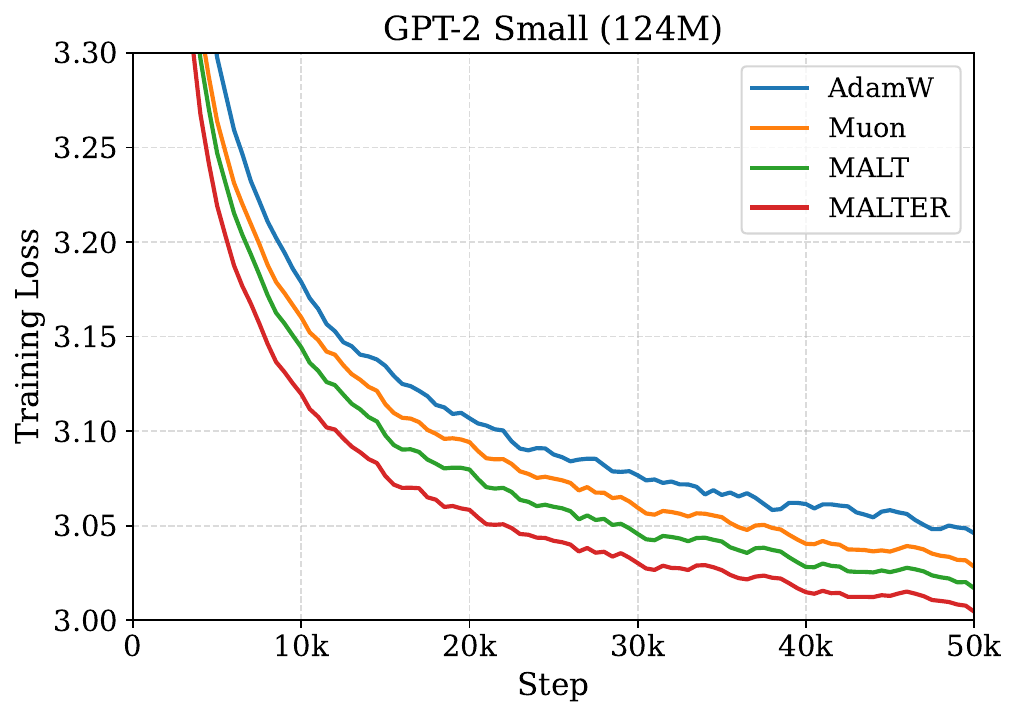}
\label{fig:small_tr}
\end{minipage}%
}%
\subfigure[Validation Loss]{
\begin{minipage}[t]{0.5\linewidth}
\centering
\includegraphics[width=3.5in]{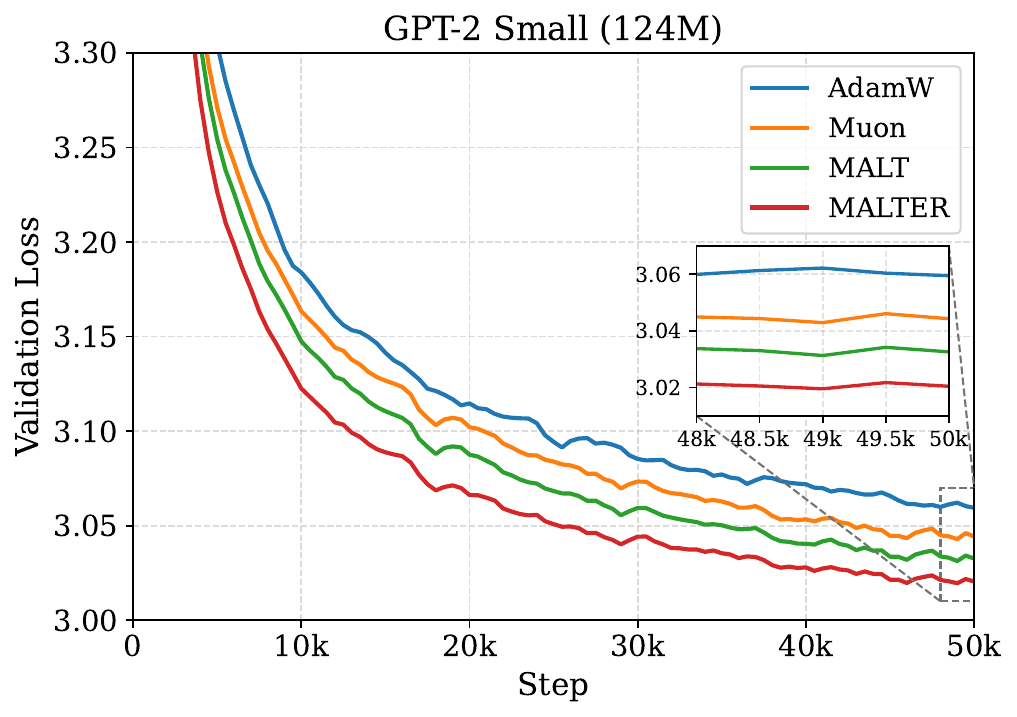}
\label{fig:small_val}
\end{minipage}%
}%
\centering
\caption{ GPT-2 small (124M) pretraining. The small model is trained under learning rates reported in Table~\ref{tab:optimal_lr} for 50K steps.}
\label{fig:small_loss}
\end{figure}

\begin{figure}[htbp]
\centering
\subfigure[Training Loss]{
\begin{minipage}[t]{0.5\linewidth}
\centering
\includegraphics[width=3.5in]{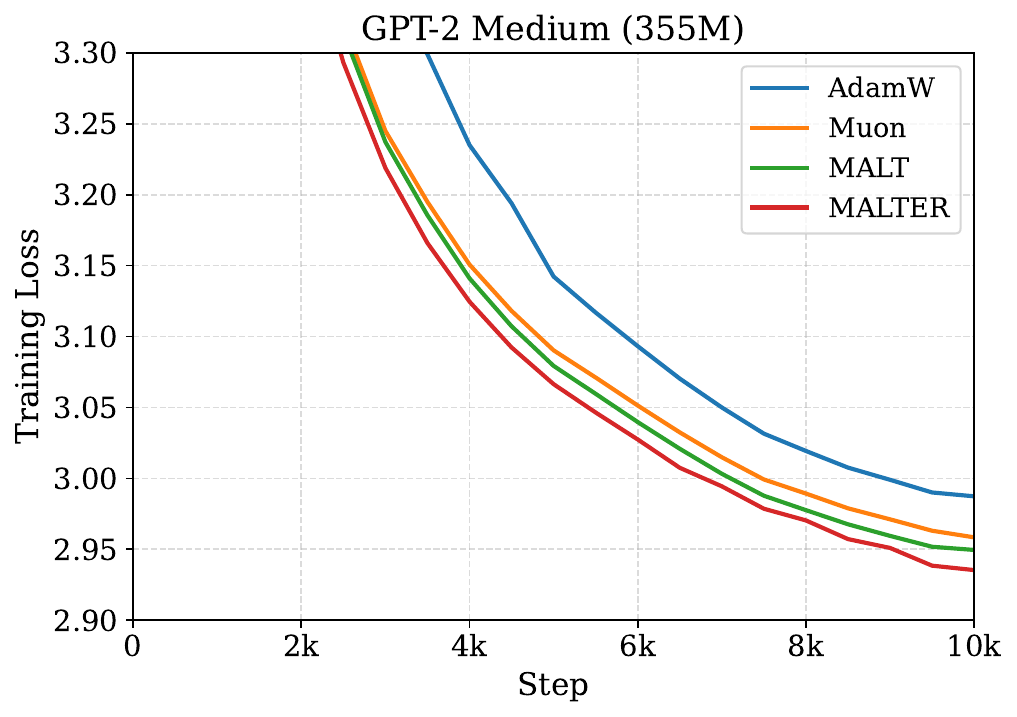}
\label{fig:medium_tr}
\end{minipage}%
}%
\subfigure[Validation Loss]{
\begin{minipage}[t]{0.5\linewidth}
\centering
\includegraphics[width=3.5in]{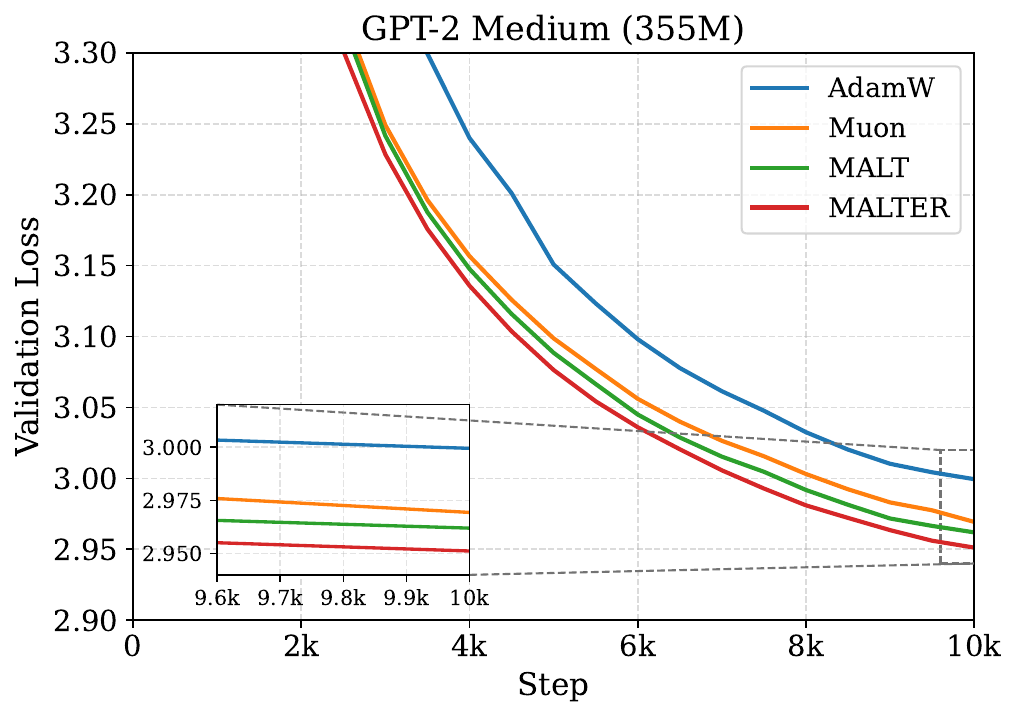}
\label{fig:medium_val}
\end{minipage}%
}%
\centering
\caption{ GPT-2 medium (355 M) pretraining. The medium model is trained under learning rates reported in Table~\ref{tab:optimal_lr} for 10K steps.}
\label{fig:medium_loss}
\end{figure}
\begin{figure}[htbp]
\centering
\subfigure[Training Loss]{
\begin{minipage}[t]{0.5\linewidth}
\centering
\includegraphics[width=3.5in]{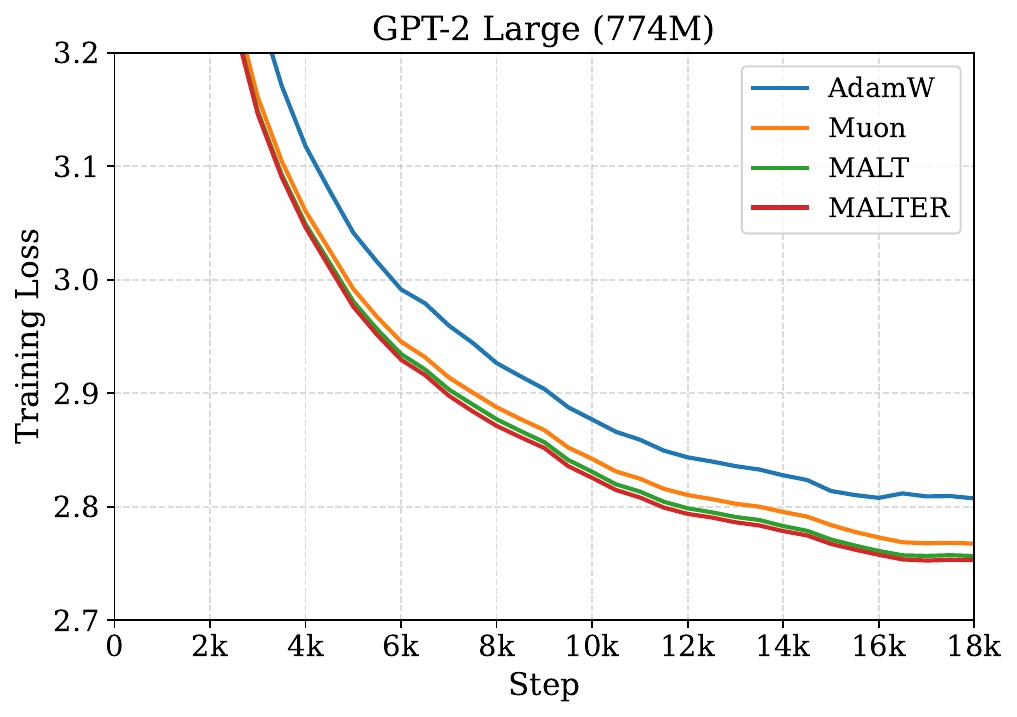}
\label{fig:large_tr}
\end{minipage}%
}%
\subfigure[Validation Loss]{
\begin{minipage}[t]{0.5\linewidth}
\centering
\includegraphics[width=3.5in]{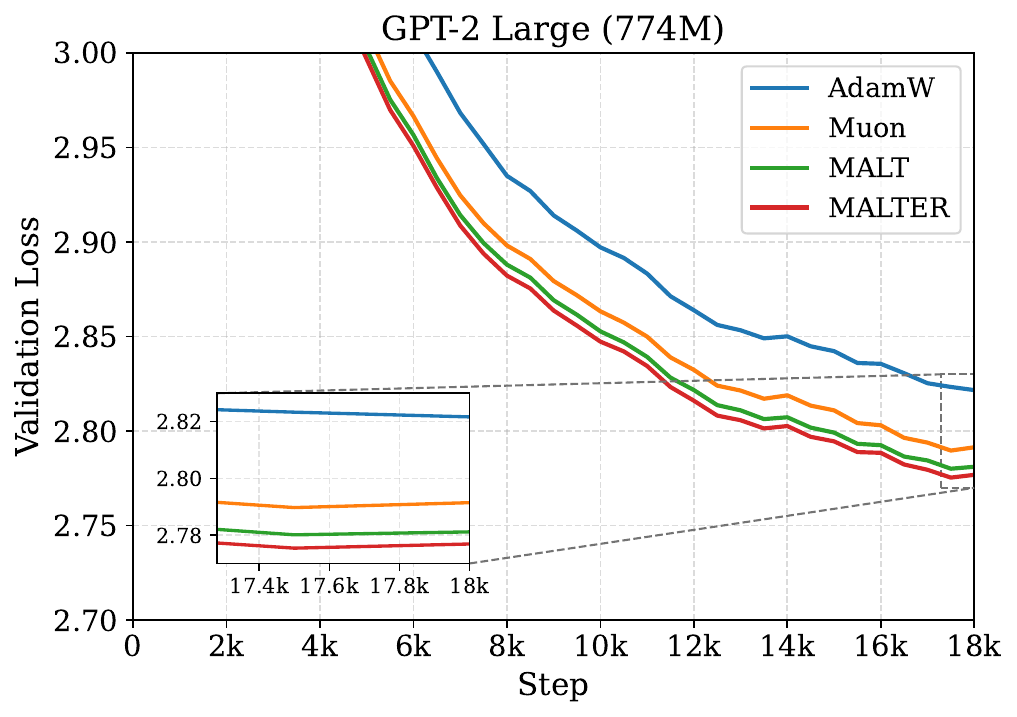}
\label{fig:large_val}
\end{minipage}%
}%
\centering
\caption{ GPT-2 large (774 M) pretraining. The large model is trained at learning rates reported in Table~\ref{tab:optimal_lr} for 20K steps, we plot the first 18K steps. \textit{(Remark: AdamW diverges in the last 500 steps, with final validation loss 3.2705.)}}
\label{fig:large_loss}
\end{figure}


\subsection{Performance Under Different Learning Rates}
\label{subsec:lr_sweep}

In this section, we report the results of the 10K-step learning-rate sweep described in Section~\ref{subsec:settings}.  Figure~\ref{fig:lr_sweep} shows the final validation loss across the learning-rate grids for training GPT-2 Small, Medium, and Large for 10K steps. These sweep experiments not only find the optimal learning rate of each optimizer, but also reflect the consistent performance gain of MALT and MALTER under different learning rates. As presented in Figure~\ref{fig:lr_sweep}, on each learning rate grid, MALT achieves a lower validation loss than Muon and AdamW. In particular, MALTER attains the best performance across all model scales. These results suggest that the gains of MALT and MALTER are not merely due to a favorable learning-rate choice.

\begin{figure}[htbp]
    \centering
    \includegraphics[width=\linewidth]{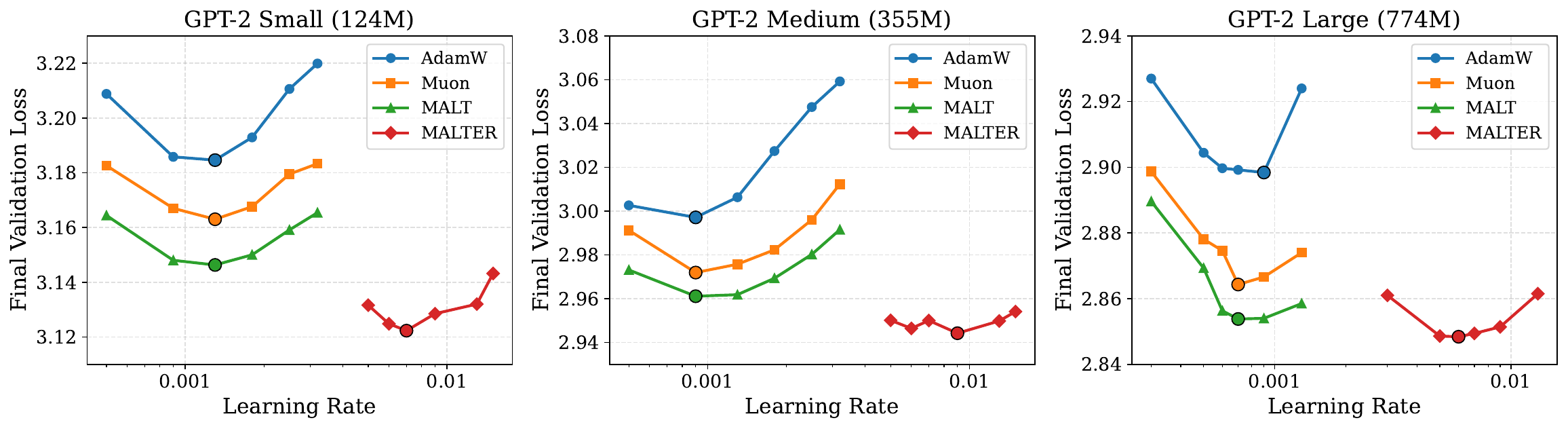}
    \caption{Final validation loss of pretraining GPT-2 Small, Medium and Large for 10K steps. 
    The learning-rate grids are shown in Table~\ref{tab:lr_grid}.
    The best point for each optimizer is marked by a dot.}
    \label{fig:lr_sweep}
\end{figure}

\subsection{Analysis of Diagonal Preconditioners}
\label{subsec:preconditioner_analysis}

\begin{figure*}[htb]
    \centering
    \includegraphics[width=0.98\textwidth]{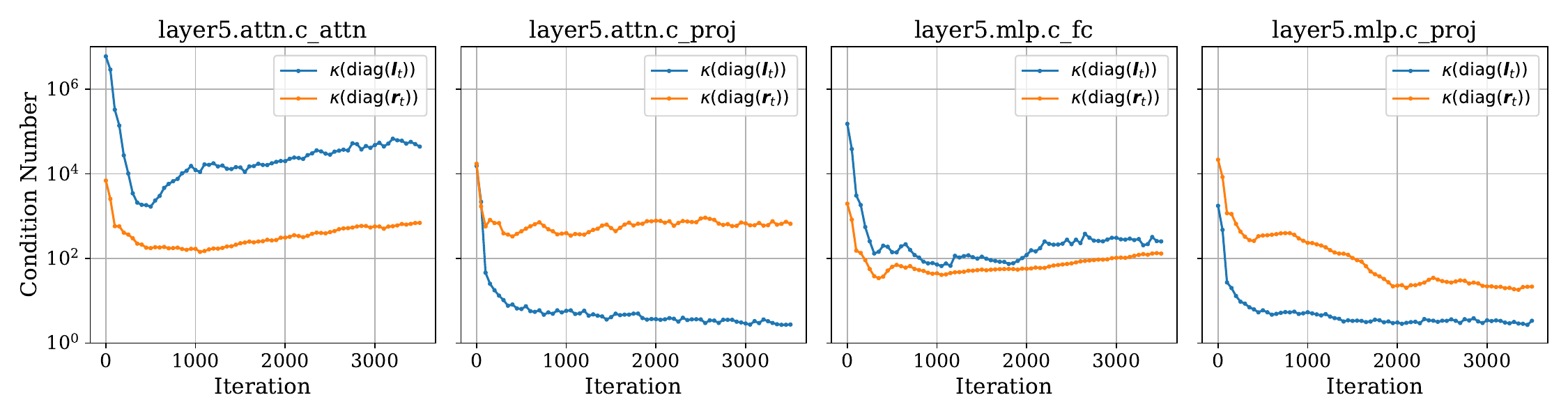}
    \includegraphics[width=0.98\textwidth]{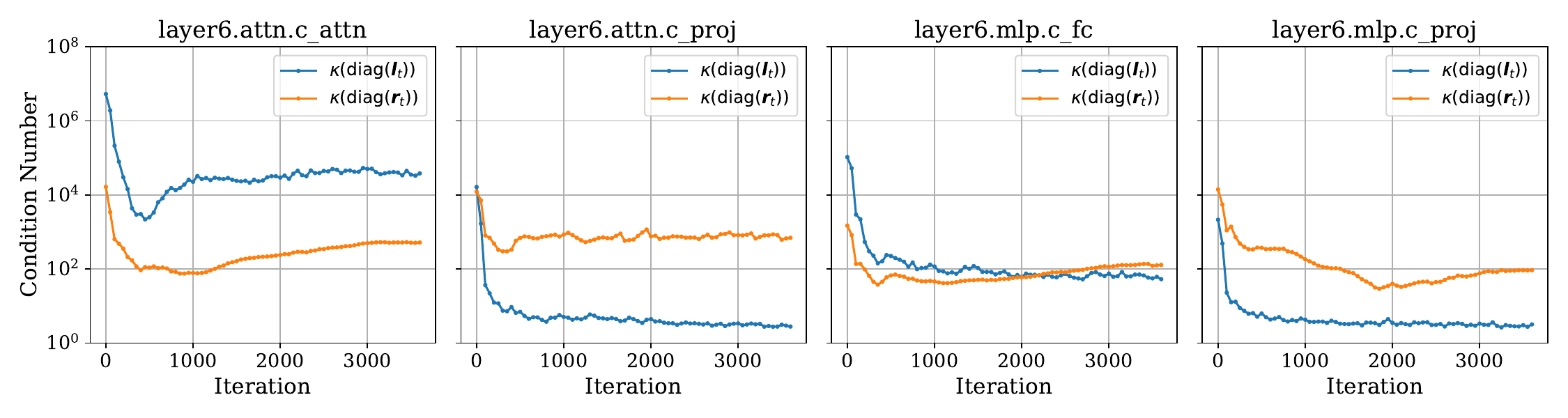}
    \includegraphics[width=0.98\textwidth]{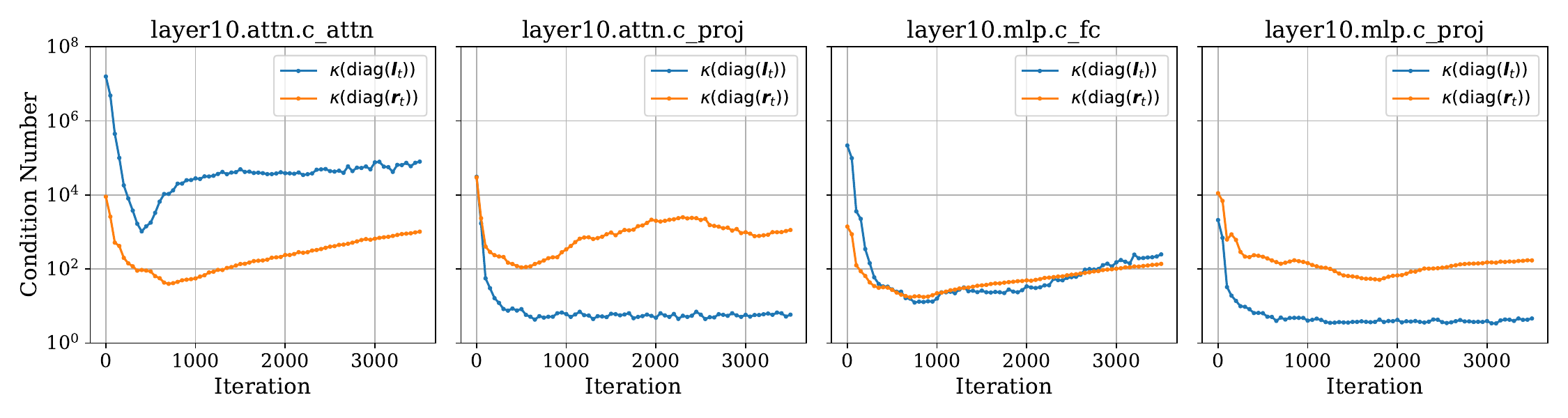}
    \caption{
    Condition numbers of the diagonal preconditioners in MALT for representative transformer layers.
    From top to bottom, we show Layer 0, Layer 5, and Layer 10.
    For each layer, we report the condition number of $\operatorname{diag}(\boldsymbol{l}_t)$ and
    $\operatorname{diag}(\boldsymbol{r}_t)$ for attention and MLP matrices.
    }
    \label{fig:diag_cond}
\end{figure*}

In this section, we further analyze the left and right diagonal preconditioners of MALT. To verify that these preconditioners are not trivial scalings, we train a GPT-2 medium model for 3500 steps using MALT, with a learning rate $0.0009$, and collect the $\boldsymbol{l}_t$ and $\boldsymbol{r}_t$ every 50 steps.

In Figure~\ref{fig:diag_cond}, we visualize the condition numbers of the left and right diagonal preconditioners for representative Transformer blocks.
Here, \texttt{attn.c\_attn} denotes the fused query-key-value projection,
\texttt{attn.c\_proj} denotes the output projection of the self-attention module,
\texttt{mlp.c\_fc} denotes the expansion layer in the feed-forward network, and
\texttt{mlp.c\_proj} denotes the corresponding down-projection layer. It can be seen that the condition numbers of preconditioners are often much larger than one across different layers and modules. 
This indicates that the diagonal preconditioners are non-trivial and capture substantial row- and column-wise anisotropy of the gradient during training.

\subsection{Time and Memory Overhead Analysis}
\label{subsec:overhead}
In this section, we demonstrate the memory and computation efficiency of MALT and MALTER. We conducted concrete experiments to compare the optimizer state memory and per-iteration wall-clock time of our proposed MALT and MALTER with AdamW and Muon. 

For a matrix parameter $W\in\mathbb{R}^{m\times n}$, AdamW stores two momentum matrices of size $mn$, while Muon stores one momentum matrix of size $mn$. Compared to Muon, MALT and APDMuon introduce mild additional memory overheads. Specifically, MALT introduces two additional vectors of sizes $m$ and $n$ for each matrix parameter. Except for two vectors, MALTER further introduces only one scalar state per matrix parameter. 


\begin{table}[htbp]
\centering
\label{tab:memory_complexity}
\begin{tabular}{lc}
\toprule
Optimizer & Optimizer State Complexity\\
\midrule
AdamW   & $2mn$ \\
Muon    & $mn$ \\
MALT  & $mn+m+n$ \\
MALTER & $mn+m+n+1$ \\
\bottomrule
\end{tabular}
\caption{
Optimizer-state memory complexity of AdamW, Muon, MALT, and MALTER. 
}
\end{table}

\begin{table}[h]
\centering
\label{tab:time_memory}
\begin{tabular}{lcc}
\toprule
Optimizer & Optimizer State Memory (MB) & Time per Iteration (ms) \\
\midrule
AdamW   & $2705.38$   & $2605.16$ \\
Muon    & $1553.38$   & $2637.62$ \\
MALT  & $1554.88$   & $2652.55$ \\
MALTER & $1554.88$   & $2666.22$ \\
\bottomrule
\end{tabular}
\caption{
Optimizer-state memory and per-iteration wall-clock time during the pretraining of GPT-2 Medium (355M) tested on a single H100 GPU. 
}
\label{tab:time_memory}
\end{table}

Apart from the memory efficiency, MALT and MALTER enjoy high computational efficiency. We measure the per-iteration wall-clock time and optimizer state memory of AdamW, Muon, MALT and MALTER. As shown in Table~\ref{tab:time_memory}, MALT and MALTER achieve comparable per-iteration wall-clock time and optimizer state memory to Muon. All time and memory measurements are conducted on the same single H100 GPU. The settings remain the same as previously stated, on a GPT-2 Medium (355M) model.

\section{Conclusion}
We proposed MALT, a lightweight curvature-aware Muon variant that performs
orthogonalization in a row-column diagonally preconditioned space.
By combining two-sided diagonal preconditioning, preconditioned
orthogonalization, and norm grafting, MALT addresses curvature anisotropy and
gradient anisotropy while preserving nearly the same memory and computational
cost as Muon. We further introduced MALTER, which incorporates a norm-based
noise-adaptive scalar stepsize in the preconditioned space. We established a
stochastic non-convex convergence guarantee for MALT and demonstrated on
GPT-2 pretraining that MALT and MALTER consistently outperform Muon, with
MALTER achieving the best overall validation performance. These results
indicate that curvature-aware preconditioning and noise-adaptive scale control
are useful and complementary ingredients for designing efficient matrix-aware
optimizers.

\bibliographystyle{plain}
\bibliography{cite}

@article{minaee2024large,
  title   = {Large Language Models: A Survey},
  author  = {Minaee, Shervin and Mikolov, Tomas and Nikzad, Narjes and Chenaghlu, Meysam and Socher, Richard and Amatriain, Xavier and Gao, Jianfeng},
  journal = {arXiv preprint arXiv:2402.06196},
  year    = {2024}
}

@article{zhao2023survey,
  title   = {A Survey of Large Language Models},
  author  = {Zhao, Wayne Xin and Zhou, Kun and Li, Junyi and Tang, Tianyi and Wang, Xiaolei and Hou, Yupeng and Min, Yingqian and Zhang, Beichen and Zhang, Junjie and Dong, Zican and others},
  journal = {arXiv preprint arXiv:2303.18223},
  year    = {2023}
}

@article{bommasani2021foundation,
  title   = {On the Opportunities and Risks of Foundation Models},
  author  = {Bommasani, Rishi and Hudson, Drew A. and Adeli, Ehsan and Altman, Russ and Arora, Simran and Arx, Sydney von and Bernstein, Michael S. and Bohg, Jeannette and Bosselut, Antoine and Brunskill, Emma and others},
  journal = {arXiv preprint arXiv:2108.07258},
  year    = {2021}
}

@misc{openai2026gpt55,
  title        = {{GPT-5.5 System Card}},
  author       = {{OpenAI}},
  year         = {2026},
  howpublished = {\url{https://openai.com/index/gpt-5-5-system-card/}},
  note         = {Accessed: 2026-06-07}
}

@article{deepseekai2025deepseekv32,
  title   = {{DeepSeek-V3.2}: Pushing the Frontier of Open Large Language Models},
  author  = {{DeepSeek-AI}},
  journal = {arXiv preprint arXiv:2512.02556},
  year    = {2025}
}

@book{nocedal2006numerical,
  title     = {Numerical Optimization},
  author    = {Nocedal, Jorge and Wright, Stephen J.},
  edition   = {2},
  publisher = {Springer},
  year      = {2006}
}

@inproceedings{ghorbani2019hessian,
  title     = {An Investigation into Neural Net Optimization via Hessian Eigenvalue Density},
  author    = {Ghorbani, Behrooz and Krishnan, Shankar and Xiao, Ying},
  booktitle = {Proceedings of the 36th International Conference on Machine Learning},
  pages     = {2232--2241},
  year      = {2019},
  volume    = {97},
  series    = {Proceedings of Machine Learning Research},
  publisher = {PMLR}
}

@inproceedings{sagun2018empirical,
  title     = {Empirical Analysis of the Hessian of Over-Parametrized Neural Networks},
  author    = {Sagun, Levent and Evci, Utku and G{\"u}ney, V. U{\u{g}}ur and Dauphin, Yann and Bottou, L{\'e}on},
  booktitle = {International Conference on Learning Representations Workshop},
  year      = {2018}
}

@article{ye2024preconditioning,
  title   = {Preconditioning for Accelerated Gradient Descent Optimization and Regularization},
  author  = {Ye, Qiang},
  journal = {arXiv preprint arXiv:2410.00232},
  year    = {2024}
}

@misc{metaai2025llama4,
  title        = {The {Llama 4} Herd: The Beginning of a New Era of Natively Multimodal AI Innovation},
  author       = {{Meta AI}},
  year         = {2025},
  howpublished = {\url{https://ai.meta.com/blog/llama-4-multimodal-intelligence/}},
  note         = {Accessed: 2026-06-07}
}

@article{geminiteam2025gemini25,
  title   = {{Gemini 2.5}: Pushing the Frontier with Advanced Reasoning, Multimodality, Long Context, and Next Generation Agentic Capabilities},
  author  = {{Gemini Team, Google}},
  journal = {arXiv preprint arXiv:2507.06261},
  year    = {2025}
}

@article{kaplan2020scaling,
  title   = {Scaling Laws for Neural Language Models},
  author  = {Kaplan, Jared and McCandlish, Sam and Henighan, Tom and Brown, Tom B. and Chess, Benjamin and Child, Rewon and Gray, Scott and Radford, Alec and Wu, Jeffrey and Amodei, Dario},
  journal = {arXiv preprint arXiv:2001.08361},
  year    = {2020}
}

@inproceedings{rajbhandari2020zero,
  title     = {{ZeRO}: Memory Optimizations Toward Training Trillion Parameter Models},
  author    = {Rajbhandari, Samyam and Rasley, Jeff and Ruwase, Olatunji and He, Yuxiong},
  booktitle = {Proceedings of the International Conference for High Performance Computing, Networking, Storage and Analysis},
  year      = {2020}
}

@article{narayanan2021efficient,
  title   = {Efficient Large-Scale Language Model Training on {GPU} Clusters Using {Megatron-LM}},
  author  = {Narayanan, Deepak and Shoeybi, Mohammad and Casper, Jared and LeGresley, Patrick and Patwary, Mostofa and Korthikanti, Vijay Anand and Vainbrand, Dmitri and Kashinkunti, Prethvi and Bernauer, Julie and Catanzaro, Bryan and Phanishayee, Amar and Zaharia, Matei},
  journal = {arXiv preprint arXiv:2104.04473},
  year    = {2021}
}

@article{patterson2021carbon,
  title   = {Carbon Emissions and Large Neural Network Training},
  author  = {Patterson, David and Gonzalez, Joseph and Le, Quoc and Liang, Chen and Munguia, Lluis-Miquel and Rothchild, Daniel and So, David and Texier, Maud and Dean, Jeff},
  journal = {arXiv preprint arXiv:2104.10350},
  year    = {2021}
}

@article{ranganath2026navigating,
  title   = {Navigating {LLM} Valley: From {AdamW} to Memory-Efficient and Matrix-Based Optimizers},
  author  = {Ranganath, Aditya},
  journal = {arXiv preprint arXiv:2605.09176},
  year    = {2026}
}

@article{jordan2024muon,
  title={Muon: An optimizer for hidden layers in neural networks, 2024},
  author={Jordan, Keller and Jin, Yuchen and Boza, Vlado and Jiacheng, You and Cesista, Franz and Newhouse, Laker and Bernstein, Jeremy},
  journal={URL https://kellerjordan. github. io/posts/muon},
  volume={6},
  number={3},
  pages={4},
  year={2024}
}

@book{nesterov2018lectures,
  title={Lectures on Convex Optimization},
  author={Nesterov, Yurii},
  year={2018},
  publisher={Springer}
}

@article{lau2025polargrad,
  title={Polargrad: A class of matrix-gradient optimizers from a unifying preconditioning perspective},
  author={Lau, Tim Tsz-Kit and Long, Qi and Su, Weijie},
  journal={arXiv preprint arXiv:2505.21799},
  year={2025}
}

@inproceedings{martens2015optimizing,
  title={Optimizing Neural Networks with Kronecker-factored Approximate Curvature},
  author={Martens, James and Grosse, Roger},
  booktitle={Proceedings of the 32nd International Conference on Machine Learning},
  pages={2408--2417},
  year={2015},
  volume={37},
  series={Proceedings of Machine Learning Research},
  publisher={PMLR}
}

@article{duchi2011adaptive,
  title={Adaptive Subgradient Methods for Online Learning and Stochastic Optimization},
  author={Duchi, John and Hazan, Elad and Singer, Yoram},
  journal={Journal of Machine Learning Research},
  volume={12},
  number={61},
  pages={2121--2159},
  year={2011}
}

@inproceedings{kingma2015adam,
  title={Adam: A Method for Stochastic Optimization},
  author={Kingma, Diederik P. and Ba, Jimmy},
  booktitle={International Conference on Learning Representations},
  year={2015}
}

@misc{tieleman2012rmsprop,
  title={Lecture 6.5---RMSProp: Divide the Gradient by a Running Average of Its Recent Magnitude},
  author={Tieleman, Tijmen and Hinton, Geoffrey},
  howpublished={COURSERA: Neural Networks for Machine Learning},
  year={2012}
}

@inproceedings{shazeer2018adafactor,
  title={Adafactor: Adaptive learning rates with sublinear memory cost},
  author={Shazeer, Noam and Stern, Mitchell},
  booktitle={International conference on machine learning},
  pages={4596--4604},
  year={2018},
  organization={PMLR}
}

@inproceedings{anil2019memory,
  title={Memory Efficient Adaptive Optimization},
  author={Anil, Rohan and Gupta, Vineet and Koren, Tomer and Singer, Yoram},
  booktitle={Advances in Neural Information Processing Systems},
  year={2019}
}

@inproceedings{zhang2025adammini,
  title={Adam-mini: Use Fewer Learning Rates To Gain More},
  author={Zhang, Yushun and Chen, Congliang and Li, Ziniu and Ding, Tian and Wu, Chenwei and Kingma, Diederik P. and Ye, Yinyu and Luo, Zhi-Quan and Sun, Ruoyu},
  booktitle={International Conference on Learning Representations},
  year={2025}
}

@article{liu2025muon,
  title={Muon is scalable for llm training},
  author={Liu, Jingyuan and Su, Jianlin and Yao, Xingcheng and Jiang, Zhejun and Lai, Guokun and Du, Yulun and Qin, Yidao and Xu, Weixin and Lu, Enzhe and Yan, Junjie and others},
  journal={arXiv preprint arXiv:2502.16982},
  year={2025}
}

@misc{zhang2026gram,
  title={Gram Newton--Schulz: A Fast, Hardware-Aware Newton--Schulz Algorithm for Muon},
  author={Jack Zhang and Noah Amsel and Berlin Chen and Tri Dao},
  howpublished={\url{https://tridao.me/blog/2026/gram-newton-schulz/}},
  year={2026}
}

@article{amsel2025polar,
  title   = {The Polar Express: Optimal Matrix Sign Methods and Their Application to the Muon Algorithm},
  author  = {Amsel, Noah and Persson, David and Musco, Christopher and Gower, Robert M.},
  journal = {arXiv preprint arXiv:2505.16932},
  year    = {2025}
}

@article{grishina2025accelerating,
  title   = {Accelerating Newton-Schulz Iteration for Orthogonalization via Chebyshev-type Polynomials},
  author  = {Grishina, Ekaterina and Smirnov, Matvey and Rakhuba, Maxim},
  journal = {arXiv preprint arXiv:2506.10935},
  year    = {2025}
}

@inproceedings{dauphin2014identifying,
  title     = {Identifying and Attacking the Saddle Point Problem in High-Dimensional Non-Convex Optimization},
  author    = {Dauphin, Yann N. and Pascanu, Razvan and Gulcehre, Caglar and Cho, Kyunghyun and Ganguli, Surya and Bengio, Yoshua},
  booktitle = {Advances in Neural Information Processing Systems},
  volume    = {27},
  year      = {2014}
}

@article{refael2026sumo,
  title={Sumo: Subspace-aware moment-orthogonalization for accelerating memory-efficient llm training},
  author={Refael, Yehonathan and Smorodinsky, Guy and Tirer, Tom and Lindenbaum, Ofir},
  journal={Advances in Neural Information Processing Systems},
  volume={38},
  pages={147250--147281},
  year={2026}
}

@article{he2025low,
  title={Low-rank orthogonalization for large-scale matrix optimization with applications to foundation model training},
  author={He, Chuan and Deng, Zhanwang and Lu, Zhaosong},
  journal={arXiv preprint arXiv:2509.11983},
  year={2025}
}

@article{nguyen2026spectral,
  title={Spectral Flattening Is All Muon Needs: How Orthogonalization Controls Learning Rate and Convergence},
  author={Nguyen, Tien-Phat and Nguyen, Truong and Truong, Minh-Phuc and Nguyen, Tuc and Bailey, James and Le, Trung},
  journal={arXiv preprint arXiv:2605.13079},
  year={2026}
}

@article{si2025adamuon,
  title={Adamuon: Adaptive muon optimizer},
  author={Si, Chongjie and Zhang, Debing and Shen, Wei},
  journal={arXiv preprint arXiv:2507.11005},
  year={2025}
}

@article{zhang2026namo,
  title={Adam improves muon: Adaptive moment estimation with orthogonalized momentum},
  author={Zhang, Minxin and Liu, Yuxuan and Schaeffer, Hayden},
  journal={arXiv preprint arXiv:2602.17080},
  year={2026}
}

@article{li2026variance,
  title={Variance-Adaptive Muon: Accelerating LLM Pretraining with NSR-Modulated and Variance-Scaled Momentum},
  author={Li, Jingru and Fan, Yibo and Li, Huan},
  journal={arXiv preprint arXiv:2601.14603},
  year={2026}
}

@article{song2026decoupling,
  title={Decoupling Variance and Scale-Invariant Updates in Adaptive Gradient Descent for Unified Vector and Matrix Optimization},
  author={Song, Zitao and Bai, Cedar Site and Zhang, Zhe and Bullins, Brian and Gleich, David F},
  journal={arXiv preprint arXiv:2602.06880},
  year={2026}
}

@article{cheng2026trasmuon,
  title={TrasMuon: Trust-Region Adaptive Scaling for Orthogonalized Momentum Optimizers},
  author={Cheng, Peng and Zang, Jiucheng and Li, Qingnan and Ma, Liheng and Cui, Yufei and Zhang, Yingxue and Chen, Boxing and Jian, Ming and Tong, Wen},
  journal={arXiv preprint arXiv:2602.13498},
  year={2026}
}

@article{li2025normuon,
  title={NorMuon: Making Muon more efficient and scalable},
  author={Li, Zichong and Liu, Liming and Liang, Chen and Chen, Weizhu and Zhao, Tuo},
  journal={arXiv preprint arXiv:2510.05491},
  year={2025}
}

@article{liu2026muon2,
  title={Muon $^{2}$: Boosting Muon via Adaptive Second-Moment Preconditioning},
  author={Liu, Ziyue and Zhang, Ruijie and Wang, Zhengyang and Zhao, Yequan and Su, Yupeng and Yang, Zi and Zhang, Zheng},
  journal={arXiv preprint arXiv:2604.09967},
  year={2026}
}

@article{aurora2026,
  title={Aurora: A Leverage-Aware Optimizer for Rectangular Matrices, 2026},
  author={Dewulf, Alec and Pai, Dhruv and Yang, Li and Zhang, Ashley and Keigwin, Ben},
  journal={URL https://blog.tilderesearch.com/blog/aurora},
  year={2026}
}

@article{deng2026rmnp,
  title={Rmnp: Row-momentum normalized preconditioning for scalable matrix-based optimization},
  author={Deng, Shenyang and Ouyang, Zhuoli and Pang, Tianyu and Liu, Zihang and Jin, Ruochen and Yu, Shuhua and Yang, Yaoqing},
  journal={arXiv preprint arXiv:2603.20527},
  year={2026}
}

@article{lion2026muown,
  title={Muown: Row-Norm Control for Muon Optimization},
  author={Lion, Kai and H{\"u}bler, Florian and Li, Bingcong and Orvieto, Antonio and He, Niao},
  journal={arXiv preprint arXiv:2605.10797},
  year={2026}
}

@article{yuan2026nora,
  title={Nora: Normalized Orthogonal Row Alignment for Scalable Matrix Optimizer},
  author={Yuan, Jinghui and Zou, Jiaxuan and Wang, Shuo and Liu, Yong and Nie, Feiping},
  journal={arXiv preprint arXiv:2605.03769},
  year={2026}
}

@article{zhang2026muonplus,
  title={Muon+: Towards better muon via one additional normalization step},
  author={Zhang, Ruijie and Zhao, Yequan and Liu, Ziyue and Wang, Zhengyang and Zhang, Zheng},
  journal={arXiv preprint arXiv:2602.21545},
  year={2026}
}

@article{chang2026muoneq,
  title={Muoneq: Balancing before orthogonalization with lightweight equilibration},
  author={Chang, Da and Shi, Qiankun and Zhang, Lvgang and Li, Yu and Zhang, Ruijie and Lu, Yao and Liu, Yongxiang and Yuan, Ganzhao},
  journal={arXiv preprint arXiv:2603.28254},
  year={2026}
}

@inproceedings{yao2021adahessian,
  title={AdaHessian: An Adaptive Second Order Optimizer for Machine Learning},
  author={Yao, Zhewei and Gholami, Amir and Shen, Sheng and Mustafa, Mustafa and Keutzer, Kurt and Mahoney, Michael W.},
  booktitle={Proceedings of the AAAI Conference on Artificial Intelligence},
  volume={35},
  number={12},
  pages={10665--10673},
  year={2021}
}

@inproceedings{liu2024sophia,
  title={Sophia: A Scalable Stochastic Second-order Optimizer for Language Model Pre-training},
  author={Liu, Hong and Li, Zhiyuan and Hall, David and Liang, Percy and Ma, Tengyu},
  booktitle={International Conference on Learning Representations},
  year={2024}
}

@article{huang2026spectra,
  title={Spectra: Rethinking Optimizers for LLMs Under Spectral Anisotropy},
  author={Huang, Zhendong and Cao, Hengjie and Dong, Fang and Huang, Ruijun and Chen, Mengyi and Yang, Yifeng and Zhang, Xin and Chen, Anrui and Dong, Mingzhi and Wang, Yujiang and others},
  journal={arXiv preprint arXiv:2602.11185},
  year={2026}
}

@inproceedings{gupta2018shampoo,
  title={Shampoo: Preconditioned Stochastic Tensor Optimization},
  author={Gupta, Vineet and Koren, Tomer and Singer, Yoram},
  booktitle={Proceedings of the 35th International Conference on Machine Learning},
  pages={1842--1850},
  year={2018},
  volume={80},
  series={Proceedings of Machine Learning Research},
  publisher={PMLR}
}

@inproceedings{duvvuri2024caspr,
  title={Combining Axes Preconditioners through Kronecker Approximation for Deep Learning},
  author={Duvvuri, Sai Surya and Devvrit, Fnu and Anil, Rohan and Hsieh, Cho-Jui and Dhillon, Inderjit S.},
  booktitle={International Conference on Learning Representations},
  year={2024}
}

@inproceedings{vyas2025soap,
  title={SOAP: Improving and Stabilizing Shampoo using Adam for Language Modeling},
  author={Vyas, Nikhil and Morwani, Depen and Zhao, Rosie and Kwun, Mujin and Shapira, Itai and Brandfonbrener, David and Janson, Lucas and Kakade, Sham M.},
  booktitle={International Conference on Learning Representations},
  year={2025}
}

@article{xu2026fismo,
  title={FISMO: Fisher-Structured Momentum-Orthogonalized Optimizer},
  author={Xu, Chenrui and Yan, Wenjing and Zhang, Ying-Jun Angela},
  journal={arXiv preprint arXiv:2601.21750},
  year={2026}
}

@article{zhang2026mousse,
  title={Mousse: Rectifying the geometry of muon with curvature-aware preconditioning},
  author={Zhang, Yechen and Xing, Shuhao and Huang, Junhao and Lv, Kai and Zhou, Yunhua and Qiu, Xipeng and Guo, Qipeng and Chen, Kai},
  journal={arXiv preprint arXiv:2603.09697},
  year={2026}
}

@article{du2026newton,
  title={The newton-muon optimizer},
  author={Du, Zhehang and Su, Weijie},
  journal={arXiv preprint arXiv:2604.01472},
  year={2026}
  }

@article{amari1998natural,
  title={Natural Gradient Works Efficiently in Learning},
  author={Amari, Shun-ichi},
  journal={Neural Computation},
  volume={10},
  number={2},
  pages={251--276},
  year={1998}
}

@article{schraudolph2002fast,
  title={Fast Curvature Matrix-Vector Products for Second-Order Gradient Descent},
  author={Schraudolph, Nicol N.},
  journal={Neural Computation},
  volume={14},
  number={7},
  pages={1723--1738},
  year={2002}
}

@inproceedings{martens2010deep,
  title={Deep Learning via Hessian-free Optimization},
  author={Martens, James},
  booktitle={Proceedings of the 27th International Conference on Machine Learning},
  pages={735--742},
  year={2010}
}

@inproceedings{loshchilov2019decoupled,
  title={Decoupled Weight Decay Regularization},
  author={Loshchilov, Ilya and Hutter, Frank},
  booktitle={International Conference on Learning Representations},
  year={2019}
}

@inproceedings{botev2017practical,
  title={Practical Gauss-Newton optimisation for deep learning},
  author={Botev, Aleksandar and Ritter, Hippolyt and Barber, David},
  booktitle={International Conference on Machine Learning},
  pages={557--565},
  year={2017},
  organization={PMLR}
}

@article{zhang2024transformers,
  title={Why transformers need adam: A hessian perspective},
  author={Zhang, Yushun and Chen, Congliang and Ding, Tian and Li, Ziniu and Sun, Ruoyu and Luo, Zhi-Quan},
  journal={Advances in neural information processing systems},
  volume={37},
  pages={131786--131823},
  year={2024}
}

@article{dong2025towards,
  title={Towards quantifying the hessian structure of neural networks},
  author={Dong, Zhaorui and Zhang, Yushun and Yao, Jianfeng and Sun, Ruoyu},
  journal={arXiv preprint arXiv:2505.02809},
  year={2025}
}

@misc{karpathy2022nanogpt,
  author       = {Karpathy, Andrej},
  title        = {nanoGPT},
  year         = {2022},
  howpublished = {\url{https://github.com/karpathy/nanoGPT}},
  note         = {GitHub repository}
}

@misc{gokaslan2019openwebtext,
  title={Openwebtext corpus},
  author={Gokaslan, Aaron and Cohen, Vanya and Pavlick, Ellie and Tellex, Stefanie},
  year={2019}
}

@inproceedings{bernstein2018signsgd,
  title={signSGD: Compressed Optimisation for Non-Convex Problems},
  author={Bernstein, Jeremy and Wang, Yu-Xiang and Azizzadenesheli, Kamyar and Anandkumar, Anima},
  booktitle={International Conference on Machine Learning},
  pages={560--569},
  year={2018},
  organization={PMLR}
}

@article{agarwal2020disentangling,
  title={Disentangling adaptive gradient methods from learning rates},
  author={Agarwal, Naman and Anil, Rohan and Hazan, Elad and Koren, Tomer and Zhang, Cyril},
  journal={arXiv preprint arXiv:2002.11803},
  year={2020}
}

@article{anil2020scalable,
  title={Scalable second order optimization for deep learning},
  author={Anil, Rohan and Gupta, Vineet and Koren, Tomer and Regan, Kevin and Singer, Yoram},
  journal={arXiv preprint arXiv:2002.09018},
  year={2020}
}

@article{frans2025really,
  title={What really matters in matrix-whitening optimizers?},
  author={Frans, Kevin and Abbeel, Pieter and Levine, Sergey},
  journal={arXiv preprint arXiv:2510.25000},
  year={2025}
}

@article{frans2026stable,
  title={A stable whitening optimizer for efficient neural network training},
  author={Frans, Kevin and Levine, Sergey and Abbeel, Pieter},
  journal={Advances in Neural Information Processing Systems},
  volume={38},
  pages={174086--174110},
  year={2026}
}

@article{shen2025convergence,
  title={On the convergence analysis of muon},
  author={Shen, Wei and Huang, Ruichuan and Huang, Minhui and Shen, Cong and Zhang, Jiawei},
  journal={arXiv preprint arXiv:2505.23737},
  year={2025}
}

@article{sagun2016eigenvalues,
  title={Eigenvalues of the Hessian in Deep Learning: Singularity and Beyond},
  author={Sagun, Levent and Bottou, Leon and LeCun, Yann},
  journal={arXiv preprint arXiv:1611.07476},
  year={2016}
}

@article{sagun2017empirical,
  title={Empirical Analysis of the Hessian of Over-Parametrized Neural Networks},
  author={Sagun, Levent and Evci, Utku and Guney, V. Ugur and Dauphin, Yann and Bottou, Leon},
  journal={arXiv preprint arXiv:1706.04454},
  year={2017}
}

@inproceedings{tang2025overlooked,
  title={Investigating the Overlooked Hessian Structure},
  author={Tang, Q. Y. and others},
  booktitle={International Conference on Machine Learning},
  year={2025}
}

@inproceedings{ben2024high,
  title={High-dimensional SGD aligns with emerging outlier eigenspaces},
  author={Ben Arous, Gerard and Gheissari, Reza and Huang, Jiaoyang and Jagannath, Aukosh},
  booktitle={International Conference on Learning Representations},
  volume={2024},
  pages={47732--47778},
  year={2024}
}

@inproceedings{ibrahim2024simple,
  title={Simple and Scalable Strategies to Continually Pre-train Large Language Models},
  author={Ibrahim, Adam and Th{\'e}rien, Benjamin and Gupta, Kshitij and Richter, Mats L. and Anthony, Quentin and Lesort, Timoth{\'e}e and Belilovsky, Eugene and Rish, Irina},
  booktitle={International Conference on Learning Representations},
  year={2024}
}

@inproceedings{wen2024understanding,
  title={Understanding Warmup-Stable-Decay Learning Rates: A River Valley Loss Landscape Perspective},
  author={Wen, Kaiyue and Li, Zhiyuan and Wang, Jason and Hall, David and Liang, Percy and Ma, Tengyu},
  booktitle={International Conference on Learning Representations},
  year={2025}
}

@article{nagashima2026improved,
  title={Improved Convergence Rates of Muon Optimizer for Nonconvex Optimization},
  author={Nagashima, Shuntaro and Iiduka, Hideaki},
  journal={arXiv preprint arXiv:2601.19400},
  year={2026}
}

\newpage
\appendix

\section{Proof of Theorem \ref{thm-1}}
Based on Assumption \ref{assum-2}, there is
\begin{align}\label{descent-1}
\mathbb E \left[ f\left(  \boldsymbol W_{t}\right) \right] & \leq \mathbb E \left[ f\left(  \boldsymbol W_{t-1}\right) \right] + \mathbb E \left[\langle \nabla f\left(  \boldsymbol W_{t-1}\right)  , \boldsymbol W_{t} - \boldsymbol W_{t-1}  \rangle \right] + \frac{ L}{2} \mathbb E \left[\left\|  \boldsymbol W_{t} - \boldsymbol W_{t-1}  \right\|_F^2 \right] \nonumber \\
& \leq  \mathbb E \left[ f\left(  \boldsymbol W_{t-1}\right) \right] - \mathbb E \left[\frac{ \sqrt n \eta }{ \left\| \boldsymbol D_t\right\|_F + \epsilon} \langle  \nabla f\left(  \boldsymbol W_{t-1}\right), \boldsymbol L_t \boldsymbol O_t \boldsymbol R_t  \rangle \right] + \frac L 2 n\eta^2 \nonumber \\
& = \mathbb E \left[ f\left(  \boldsymbol W_{t-1}\right) \right]  - \mathbb E \left[\frac{ \sqrt n \eta }{ \left\| \boldsymbol L_t \boldsymbol O_t \boldsymbol R_t\right\|_F + \epsilon}  \langle \boldsymbol L_t \left( \nabla f \left( \boldsymbol W_{t-1} \right) - \boldsymbol M_t \right) \boldsymbol R_t , \boldsymbol O_t\rangle \right] \nonumber \\
& \quad - \mathbb E \left[ \frac{ \sqrt n \eta }{ \left\| \boldsymbol L_t \boldsymbol O_t \boldsymbol R_t \right\|_F + \epsilon}  \langle \boldsymbol L_t \boldsymbol M_t \boldsymbol R_t  ,  \operatorname{Orth} \left( \boldsymbol L_t \boldsymbol M_t \boldsymbol R_t  \right) \rangle   + \frac L 2 n\eta^2 \right] \nonumber \\
& \overset{(i)}{\leq} \mathbb E \left[ f\left(  \boldsymbol W_{t-1}\right) \right] + \mathbb E \left[\frac{ n \eta} { \left\| \boldsymbol L_t \boldsymbol O_t \boldsymbol R_t  \right\|_F +\epsilon } \left\|   \boldsymbol L_t \left( \nabla f \left( \boldsymbol W_{t-1} \right) - \boldsymbol M_t \right) \boldsymbol R_t \right\|_F \right]  +\frac L 2 n\eta^2 \nonumber \\
& \quad  - \mathbb E \left[ \frac{ \sqrt n \eta} { \left\| \boldsymbol L_t \boldsymbol O_t \boldsymbol R_t  \right\|_F +\epsilon } \left\|  \boldsymbol L_t \boldsymbol M_t \boldsymbol R_t   \right\|_* \right] \nonumber \\
& \overset{(ii)}{\leq}  \mathbb E \left[ f\left(  \boldsymbol W_{t-1}\right) \right]  + \mathbb E \left[ \frac{ n \eta} { \left\| \boldsymbol L_t \boldsymbol O_t \boldsymbol R_t  \right\|_F +\epsilon } \left\|   \boldsymbol L_t \left( \nabla f \left( \boldsymbol W_{t-1} \right) - \boldsymbol M_t \right) \boldsymbol R_t \right\|_F \right]  +  \frac L 2 n\eta^2 \nonumber \\
& \quad - \mathbb E \left[ \frac{ \sqrt n \eta} { \left\| \boldsymbol L_t \boldsymbol O_t \boldsymbol R_t  \right\|_F +\epsilon }  \left(   \left\| \boldsymbol L_t \nabla f \left( \boldsymbol W_{t-1} \right) \boldsymbol R_t \right\|_* -  \left\| \boldsymbol L_t \left( \nabla f \left( \boldsymbol W_{t-1} \right) - \boldsymbol M_t \right) \boldsymbol R_t \right\|_*  \right) \right]\nonumber \\
& \overset{(iii)}{\leq}   \mathbb E \left[ f\left(  \boldsymbol W_{t-1}\right) \right] + 2  n \epsilon^{-1} \eta  \mathbb E \left[ \left\|   \boldsymbol L_t \left( \nabla f \left( \boldsymbol W_{t-1} \right) - \boldsymbol M_t \right) \boldsymbol R_t \right\|_F \right]   + \frac L 2 n \eta^2\nonumber \\
& \quad - \mathbb E \left[ \frac{ \sqrt n \eta} { \left\| \boldsymbol L_t \boldsymbol O_t \boldsymbol R_t  \right\|_F +\epsilon } \left\| \boldsymbol L_t\nabla f \left(  \boldsymbol W_{t-1}\right)  \boldsymbol R_t  \right\|_F \right] \nonumber \\
 & \overset{(iv)}{\leq}   \mathbb E \left[ f\left(  \boldsymbol W_{t-1}\right) \right] + 2 n \epsilon^{-5/4} \eta  \mathbb E \left[ \left\|   \nabla f \left( \boldsymbol W_{t-1} \right) - \boldsymbol M_t  \right\|_F \right]     + \frac L 2 n \eta^2 \nonumber \\
 & \quad - \frac{ \sqrt n \eta} {  \epsilon^{-1/4} \sqrt n  +\epsilon } \mathbb E \left[ \left\| \boldsymbol L_t\nabla f \left(  \boldsymbol W_{t-1}\right)  \boldsymbol R_t  \right\|_F \right] \nonumber \\
 & \leq   \mathbb E \left[ f\left(  \boldsymbol W_{t-1}\right) \right] + 2 n \epsilon^{-5/4} \eta \mathbb E \left[ \left\|   \nabla f \left( \boldsymbol W_{t-1} \right) - \boldsymbol M_t  \right\|_F \right]   - \frac{\epsilon^{1/4}}{2} \eta  \mathbb E \left[ \left\| \boldsymbol L_t\nabla f \left(  \boldsymbol W_{t-1}\right)  \boldsymbol R_t  \right\|_F \right]  + \frac L 2 n \eta^2, 
\end{align}
where (i) and (ii) are due to Cauchy-Schwarz and triangle inequalities, respectively. (iii) is because norm inequality $\left\| \boldsymbol X \right\|_F\leq \left\| \boldsymbol X \right\|_*\leq \sqrt{ \operatorname{rank}\left( \boldsymbol X\right) } \left\| \boldsymbol X \right\|_F $. (iv) and last inequality are  due to $\left\| \boldsymbol L_t \right\| \leq \epsilon^{-1/8}, \left\| \boldsymbol R_t \right\| \leq \epsilon^{-1/8}, \epsilon \in (0,1)$. \\

Now we come to lower bound the term $ \mathbb E \left[\left\| \boldsymbol L_t \nabla f \left( \boldsymbol W_{t-1} \right) \boldsymbol R_t \right\|_F \right]$ as 
\begin{align}\label{lower-bound}
& \mathbb E \left[\left\| \boldsymbol L_t \nabla f \left( \boldsymbol W_{t-1} \right) \boldsymbol R_t \right\|_{F} \right]  \geq \mathbb E \left[ \sigma_{\min} \left(  \boldsymbol L_t\right) \sigma_{\min} \left(  \boldsymbol R_t\right) \left\|  \nabla f \left( \boldsymbol W_{t-1} \right)  \right\|_F  \right] \nonumber \\
& \qquad \qquad \qquad \qquad \qquad\geq  \mathbb E \left [  \frac{1}{\left( \left\| \boldsymbol l_t\right\|_1 +\epsilon \right)^{\frac 1 8}} \frac{1}{\left( \left\| \boldsymbol r_t\right\|_1 +\epsilon \right)^{\frac 1 8}}  \left\| \nabla f \left( \boldsymbol W_{t-1} \right) \right\|_{F}  \right] \nonumber \\
&\qquad \qquad \qquad \qquad \qquad \overset{(a)}{\geq} \frac{ \mathbb E^2 \left[  \left\| \nabla f \left( \boldsymbol W_{t-1} \right) \right\|_{F}^{\frac 1 2}  \right]   }{ \mathbb E \left[  \left( \left\| \boldsymbol l_t\right\|_1 + \epsilon \right)^{\frac 1 4}  \right]  } \nonumber \\
& \qquad \qquad \qquad \qquad \qquad\overset{(b)}{\geq}   \frac{ \mathbb E^2 \left[  \left\| \nabla f \left( \boldsymbol W_{t-1} \right) \right\|_{F}^{\frac 1 2}  \right]   }{   \mathbb E \left[ \left\| \boldsymbol l_t \right\|_1^{\frac 1 4} \right] + \epsilon^{\frac 1 4}   } \nonumber \\
& \qquad \qquad \qquad \qquad \qquad =  \frac{ \mathbb E^2 \left[  \left\| \nabla f \left( \boldsymbol W_{t-1} \right) \right\|_{F}^{\frac 1 2}  \right]   } {   \left( 1 - \mu_2^t \right)^{\frac 1 4} \mathbb E \left[ \sqrt[4]{  \sum_{\tau=1}^t \frac{1-\mu_2}{1-\mu_2^t} \mu_2^{t-\tau} \left\| \boldsymbol G_\tau \right\|_F^2      } \right]      + \epsilon^{\frac 1 4} } \nonumber \\
& \quad \overset{(c)}{\geq} \frac{ \mathbb E^2 \left[  \left\| \nabla f \left( \boldsymbol W_{t-1} \right) \right\|_{F}^{\frac 1 2}  \right] }{  \left( 1 - \mu_2^t\right)^{\frac 1 4} \mathbb E \left[ \sqrt{ \sqrt{  \sum_{\tau=1}^t \frac{1-\mu_2}{1-\mu_2^t} \mu_2^{t-\tau} \left\| \boldsymbol G_\tau - \nabla f\left( \boldsymbol W_{\tau-1}\right) \right\|_F^2  } + \sqrt{ \sum_{\tau=1}^t \frac{1-\mu_2}{1-\mu_2^t} \mu_2^{t-\tau} \left\|\nabla f\left( \boldsymbol W_{\tau -1}\right) \right\|_F^2  }} \right]  + \epsilon^{\frac 1 4}} \nonumber \\
& \quad \overset{(d)}{\geq} \frac{ \mathbb E^2 \left[  \left\| \nabla f \left( \boldsymbol W_{t-1} \right) \right\|_{F}^{\frac 1 2}  \right] } { \left( 1 - \mu_2^t\right)^{\frac 1 4} \left( \mathbb E \left[ \sqrt[4]{   \sum_{\tau=1}^t \frac{1-\mu_2}{1-\mu_2^t} \mu_2^{t-\tau} \left\| \boldsymbol G_\tau - \nabla f\left( \boldsymbol W_{\tau-1}\right) \right\|_F^2 } \right] + \mathbb E\left[ \sqrt[4]{  \sum_{\tau=1}^t \frac{1-\mu_2}{1-\mu_2^t} \mu_2^{t-\tau} \left\|\nabla f\left( \boldsymbol W_{\tau -1}\right) \right\|_F^2   } \right]\right)  + \epsilon^{\frac 1 4}    }\nonumber \\
& \quad \overset{(e)}{\geq} \frac{ \mathbb E^2 \left[  \left\| \nabla f \left( \boldsymbol W_{t-1} \right) \right\|_{F}^{\frac 1 2}  \right] } {  \left( 1 - \mu_2^t\right)^{\frac 1 4} \left( \mathbb E^{\frac 1 4} \left[{   \sum_{\tau=1}^t \frac{1-\mu_2}{1-\mu_2^t} \mu_2^{t-\tau} \left\| \boldsymbol G_\tau - \nabla f\left( \boldsymbol W_{\tau-1}\right) \right\|_F^2 } \right] + \mathbb E^{\frac 1 4}\left[ {  \sum_{\tau=1}^t \frac{1-\mu_2}{1-\mu_2^t} \mu_2^{t-\tau} \left\|\nabla f\left( \boldsymbol W_{\tau -1}\right) \right\|_F^2   } \right]\right)  + \epsilon^{\frac 1 4}    } \nonumber \\
& \quad \overset{(f)}{\geq}\frac{ \left( 1 - \mu_2^t\right)^{ - \frac 1 4}\mathbb E^2 \left[  \left\| \nabla f \left( \boldsymbol W_{t-1} \right) \right\|_{F}^{\frac 1 2}  \right] }  {   \frac{\sqrt{\sigma}}{B^{\frac 1 4}} + \mathbb E \left[ \sqrt{ \sqrt{\sum\limits_{\tau=1}^{t} \frac{1-\mu_2}{1-\mu_2^t} \mu_2^{t-\tau} \left\|\nabla f\left( \boldsymbol W_{t -1}\right) \right\|_F^2} + \sqrt{ \sum\limits_{\tau=1}^{t} \frac{1-\mu_2}{1-\mu_2^t} \mu_2^{t-\tau} \left\|\nabla f\left( \boldsymbol W_{\tau -1}\right) -\nabla f\left( \boldsymbol W_{t -1}\right) \right\|_F^2    } } \right]   + \frac{\epsilon^{\frac 1 4}} {\left(1-\mu_2^t\right)^{\frac 1 4}}   } \nonumber \\
& \quad \overset{(g)}{\geq} \frac{ \left( 1 - \mu_2^t\right)^{ - \frac 1 4}\mathbb E^2 \left[  \left\| \nabla f \left( \boldsymbol W_{t-1} \right) \right\|_{F}^{\frac 1 2}  \right] } {  \frac{\sqrt{\sigma}}{B^{\frac 1 4}} + \mathbb E \left[ \left\| \nabla f\left( \boldsymbol W_{t-1} \right) \right\|_F^{\frac 1 2}\right] + \mathbb E^{\frac 1 4} \left[ \sum_{\tau=1}^t \frac{1-\mu_2}{1-\mu_2^t}\mu_2^{t-\tau}\left( t-\tau\right)^2L^2\eta^2 n  \right]  + \frac{\epsilon^{\frac 1 4}} {\left(1-\mu_2^t\right)^{\frac 1 4}}   } \nonumber \\
& \quad \overset{(h)}{\geq} \frac{ \mathbb E^2 \left[  \left\| \nabla f \left( \boldsymbol W_{t-1} \right) \right\|_{F}^{\frac 1 2}  \right] } {  \frac{\sqrt{\sigma}}{B^{\frac 1 4}} + \mathbb E \left[ \left\| \nabla f\left( \boldsymbol W_{t-1} \right) \right\|_F^{\frac 1 2}\right] +  \left( 1 - \mu_2^t\right)^{\frac 1 4} \cdot \sqrt{\eta L} n^{\frac 1 4} \left( \frac{1-\mu_2}{1-\mu_2^t} \right)^{\frac 1 4} \left( \frac{2}{\left(1-\mu_2\right)^3} \right)^{\frac 1 4}  + \epsilon^{\frac 1 4}   } \nonumber \\
& \quad {\geq} \frac{ \mathbb E^2 \left[  \left\| \nabla f \left( \boldsymbol W_{t-1} \right) \right\|_{F}^{\frac 1 2}  \right]  }{  \mathbb E \left[ \left\| \nabla f\left( \boldsymbol W_{t-1} \right) \right\|_F^{\frac 1 2}\right] + 1 + 2n^{\frac 1 4} \sqrt{\frac{\eta L}{1-\mu_2}} +\frac{\sqrt{\sigma}}{B^{\frac 1 4}}      },\nonumber \\
& \quad \; \geq \frac{1}{2} \min \left\{ \mathbb E \left[ \left\| \nabla f\left( \boldsymbol W_{t-1} \right) \right\|_F^{\frac 1 2}\right] , \frac{ \mathbb E^2 \left[ \left\| \nabla f\left( \boldsymbol W_{t-1} \right) \right\|_F^{\frac 1 2}\right]}{  1 + 2n^{\frac 1 4} \sqrt{\frac{\eta L}{1-\mu_2}} + \frac{\sqrt{\sigma}}{B^{\frac 1 4}}  }\right\},
\end{align}
where (a) is because $\left\| \boldsymbol l_t \right\|_1 = \left\| \boldsymbol r_t \right\|_1$ and Cauchy–Schwarz inequality $\mathbb E ^2 \left[ XY\right] \leq \mathbb E \left[ X^2 \right]\mathbb E \left[ Y^2 \right]$. (b) is due to $ \left( a+b\right)^p \leq a^p + b^p, \forall a,b\geq 0, p \in (0,1)$. (c) is based on $\sum_{\tau=1}^t \frac{1-\mu_2}{1-\mu_2^t}\mu_2^{t-\tau}$ and Minkowski inequality. (d) is due to $\sqrt{a+b} \leq \sqrt a + \sqrt b$. (e) is because Jensen inequality $\mathbb E \left[ X^{\frac 1 4}\right]\leq \mathbb E^{\frac 1 4}\left[ X\right],\forall X \geq 0$. (f) uses the Assumption \ref{assum-3} and Minkowski inequality again. (g) uses the Assumption \ref{assum-2}, 11th line of Algorithm \ref{algo-1}, $\sum_{\tau=1}^t \frac{1-\mu_2}{1-\mu_2^t}\mu_2^{t-\tau}$ and $\sqrt{a+b} \leq \sqrt a + \sqrt b$ again. (h) is due to $ \sum_{\tau=1}^\infty \mu_2^\tau \tau^2 \leq \frac{2}{\left(1 - \mu_2 \right)^3}$. The last two inequalities are due to $\epsilon \in (0,1)$ and inequality $  \frac{x^2}{ax+b}
\ge
\frac{x^2}{2\max\{ax,b\}}
=
\frac12\min\left\{\frac{x}{a},\frac{x^2}{b}\right\}, \forall a,b>0, \forall x \geq 0$.   

 Substituting \eqref{lower-bound} into \eqref{descent-1} has
\begin{align}\label{final-descent}
\mathbb E \left[ f\left( \boldsymbol W_t  \right) \right] & \leq \mathbb E \left[ f\left(  \boldsymbol W_{t-1}\right) \right] + 2 n \epsilon^{-5/4} \eta \mathbb E \left[ \left\|   \nabla f \left( \boldsymbol W_{t-1} \right) - \boldsymbol M_t  \right\|_F \right] + \frac L 2 n \eta^2 \nonumber \\
& \quad - \frac{ \epsilon^{1/4} \eta }{4} \min \left\{ \mathbb E \left[ \left\| \nabla f\left( \boldsymbol W_{t-1} \right) \right\|_F^{\frac 1 2}\right] , \frac{ \mathbb E^2 \left[ \left\| \nabla f\left( \boldsymbol W_{t-1} \right) \right\|_F^{\frac 1 2}\right]}{  1 + 2n^{\frac 1 4} \sqrt{\frac{\eta L}{1-\mu_2}} + \frac{\sqrt{\sigma}}{B^{\frac 1 4}}  }\right\}.
\end{align}

Further substituting the result in Lemma \ref{lemma-2} into \eqref{final-descent} and summing over \(t=1,\ldots,T\) yield
\begin{align}
\frac{\epsilon^{1/4}\eta}{4} &\sum_{t=1}^{T}  \min \left\{ \mathbb E \left[ \left\| \nabla f\left( \boldsymbol W_{t-1} \right) \right\|_F^{\frac 1 2}\right] , \frac{ \mathbb E^2 \left[ \left\| \nabla f\left( \boldsymbol W_{t-1} \right) \right\|_F^{\frac 1 2}\right]}{  1 + 2n^{\frac 1 4} \sqrt{\frac{\eta L}{1-\mu_2}} + \frac{\sqrt{\sigma}}{B^{\frac 1 4}}  }\right\}\nonumber \\
& \leq f\left( \boldsymbol W_0 \right) - f^\star + 2 n \epsilon^{-5/4} \eta \sum_{t=1}^T \mathbb E \left[ \left\| \nabla f\left( \boldsymbol W_{t-1} \right) - \boldsymbol M_t  \right\|_F\right] + \frac L 2n \eta^2 T \nonumber \\
& \leq f\left( \boldsymbol W_0 \right) - f^\star + 2 n \epsilon^{-5/4} \eta \sum_{t=1}^T \left( \sqrt{ \frac{1-\mu_1}{1+\mu_1}}\frac{\sigma}{\sqrt B} + \mu_1^t \left\|  \nabla f\left( \boldsymbol W_0 \right)\right\|_F + \frac{ \mu_1 L\eta \sqrt n }{1-\mu_1}   \right) +  \frac L 2n \eta^2 T \nonumber \\  
& \leq  f\left( \boldsymbol W_0 \right) - f^\star + 2 n \epsilon^{-5/4} \eta T \sqrt{ \frac{1-\mu_1}{1+\mu_1}}\frac{\sigma}{\sqrt B} + \frac{  2 n \epsilon^{-5/4} \eta }{ 1-\mu_1 } \left\|  \nabla f\left( \boldsymbol W_0 \right)\right\|_F + \frac{ 2\epsilon^{-5/4}\mu_1Ln^{\frac 3 2} \eta^2 T  }{ 1 - \mu_1 } +  \frac L 2n \eta^2 T 
\end{align}
Thus, we can obtain
\begin{align}\label{eta-condition}
& \min_{t = 1,\cdots, T}    \min \left\{ \mathbb E \left[ \left\| \nabla f\left( \boldsymbol W_{t-1} \right) \right\|_F^{\frac 1 2}\right] , \frac{ \mathbb E^2 \left[ \left\| \nabla f\left( \boldsymbol W_{t-1} \right) \right\|_F^{\frac 1 2}\right]}{  1 + 2n^{\frac 1 4} \sqrt{\frac{\eta L}{1-\mu_2}} + \frac{\sqrt{\sigma}}{B^{\frac 1 4}}  }\right\}  \leq \frac{ 4 \epsilon^{-1/4} \Delta_{0,1} }{T\eta} + 2{\epsilon^{-1/4}L n\eta} \nonumber \\
& \qquad \qquad \qquad \qquad \qquad \qquad + 8 \epsilon^{-3/2} n \sqrt{ \frac{1-\mu_1}{1+\mu_1}}\frac{\sigma}{\sqrt B} + \frac{  8 \epsilon^{-3/2} n \Delta_{0,2} }{\left( 1 - \mu_1 \right)T} + \frac{8 \epsilon^{-3/2} \mu_1 L n^{3/2} \eta }{1-\mu_1},
\end{align}
where $\Delta_{0,1}: = f\left(\boldsymbol W_0 \right) - f^\star, \Delta_{0,2}: = \left\| \nabla  f\left(\boldsymbol W_0 \right)\right\|_F$.\\

\noindent For the right hand side of the above result, there is
\begin{align}
\frac{ 4 \epsilon^{-1/4} \Delta_{0,1} }{T\eta} + 2{\epsilon^{-1/4}L n\eta} + \frac{8 \epsilon^{-3/2} \mu_1 L n^{3/2} \eta }{1-\mu_1} & \leq \frac{ 4 \epsilon^{-1/4} \Delta_{0,1} }{T\eta} + \frac{10 \epsilon^{-3/2}  L n^{3/2} \eta }{1-\mu_1} \nonumber \\
& = 4 \epsilon^{-\frac 7 8} \sqrt{ \frac{10 n^{\frac 3 2}L\Delta_{0,1}}{T\left(1 - \mu_1 \right)}  },
\end{align}
where the equality is due to the step size condition $\eta = \frac{\epsilon^{\frac 5 8}}{n^{\frac 3 4}} \sqrt{ \frac{2\Delta_{0,1}\left(1- \mu_1 \right)}{5LT} }$. Then the \eqref{eta-condition} becomes
\begin{align}
 \min_{t = 1,\cdots, T}    \min \left\{ \mathbb E \left[ \left\| \nabla f\left( \boldsymbol W_{t-1} \right) \right\|_F^{\frac 1 2}\right] , \frac{ \mathbb E^2 \left[ \left\| \nabla f\left( \boldsymbol W_{t-1} \right) \right\|_F^{\frac 1 2}\right]}{  1 + 2n^{\frac 1 4} \sqrt{\frac{\eta L}{1-\mu_2}} + \frac{\sqrt{\sigma}}{B^{\frac 1 4}}  }\right\} & \leq  4 \epsilon^{-\frac 7 8} \sqrt{ \frac{10 n^{\frac 3 2}L\Delta_{0,1}}{T\left(1 - \mu_1 \right)}  } + 8 \epsilon^{-3/2} n \sqrt{1-\mu_1}\frac{\sigma}{\sqrt B} \nonumber \\
& \quad + \frac{  8 \epsilon^{-3/2} n \Delta_{0,2} }{\left( 1 - \mu_1 \right)T}.
\end{align}
Now set $1-\mu_1= \min \left\{  \epsilon^{\frac 5 8}  \sqrt{ \frac{ 5 LB \Delta_{0,1}}{2T n^{\frac 1 2}\sigma^2}} , 1 \right\}$. If  $\epsilon^{\frac 5 8}  \sqrt{ \frac{ 5 LB \Delta_{0,1}}{2T n^{\frac 1 2}\sigma^2}} \leq 1$, then the right hand side of above upper bound becomes
\begin{align}
4 \epsilon^{-\frac 7 8} \sqrt{ \frac{10 n^{\frac 3 2}L\Delta_{0,1}}{T\left(1 - \mu_1 \right)}  } + 8 \epsilon^{-3/2} n \sqrt{1-\mu_1}\frac{\sigma}{\sqrt B} + \frac{  8 \epsilon^{-3/2} n \Delta_{0,2} }{\left( 1 - \mu_1 \right)T} \leq 16 \epsilon^{-\frac{19}{16}} n^{\frac 7 8} \sqrt[4]{ \frac{L\Delta_{0,1}\sigma^2}{TB}} + 8 \epsilon^{-\frac{17}{8}} n ^{\frac 5 4} \sqrt{\frac{\Delta^2_{0,2} \sigma^2}{TLB\Delta_{0,1}}}. 
\end{align}
If $\epsilon^{\frac 5 8}  \sqrt{ \frac{ 5 LB \Delta_{0,1}}{2T n^{\frac 1 2}\sigma^2}} \geq 1$, \eqref{eta-condition}  becomes
\begin{align}
 \min_{t = 1,\cdots, T}    \min \left\{ \mathbb E \left[ \left\| \nabla f\left( \boldsymbol W_{t-1} \right) \right\|_F^{\frac 1 2}\right] , \frac{ \mathbb E^2 \left[ \left\| \nabla f\left( \boldsymbol W_{t-1} \right) \right\|_F^{\frac 1 2}\right]}{  1 + 2n^{\frac 1 4} \sqrt{\frac{\eta L}{1-\mu_2}} + \frac{\sqrt{\sigma}}{B^{\frac 1 4}}  }\right\}  & \leq 16 \epsilon^{-\frac 7 8} n ^{\frac 3 4} \sqrt{\frac{L \Delta_{0,1}}{T}} + 8 \epsilon^{-\frac 3 2} n \frac{\sigma}{\sqrt B} + \frac{8 \epsilon^{-\frac 3 2} n \Delta_{0,2}}{T} \nonumber \\
 & \leq  31 \epsilon^{-\frac 7 8} n ^{\frac 3 4} \sqrt{\frac{L \Delta_{0,1}}{T}}  + \frac{8 \epsilon^{-\frac 3 2} n \Delta_{0,2}}{T},
\end{align}
where the last inequality is due to $  \sigma \leq \epsilon^{\frac 5 8}  \sqrt{ \frac{ 5 LB \Delta_{0,1}}{2T n^{\frac 1 2}}}  $.\\

Finally, combining the results in the above two cases can obtain the \eqref{final-result}.

\begin{lemma}\label{lemma-2}
If the loss function satisfies Assumption \ref{assum-2}, then there is 
\begin{align}
\mathbb E \left[  \left\|  \nabla f \left(\boldsymbol W_{t-1} \right) - \boldsymbol M_t \right\|_F\right] \leq \sqrt{ \frac{1-\mu_1}{1+\mu_1}}\frac{\sigma}{\sqrt B} + \mu_1^t \left\|  \nabla f\left( \boldsymbol W_0 \right)\right\|_F + \frac{ \mu_1 L\eta \sqrt n }{1-\mu_1}.
\end{align}
\end{lemma}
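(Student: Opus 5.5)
Write the momentum error recursively. From line 3 of Algorithm \ref{algo-1}, with $\boldsymbol E_t := \boldsymbol M_t - \nabla f(\boldsymbol W_{t-1})$ and $\boldsymbol M_0 = \boldsymbol 0$, we have
\begin{align*}
\boldsymbol E_t = \mu_1 \boldsymbol E_{t-1} + \mu_1\left(\nabla f(\boldsymbol W_{t-2}) - \nabla f(\boldsymbol W_{t-1})\right) + (1-\mu_1)\left(\boldsymbol G_t - \nabla f(\boldsymbol W_{t-1})\right)
\end{align*}
for $t\ge 2$, and $\boldsymbol E_1 = -\mu_1 \nabla f(\boldsymbol W_0) + (1-\mu_1)(\boldsymbol G_1 - \nabla f(\boldsymbol W_0))$. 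Unrolling gives three pieces:
\begin{align*}
\boldsymbol E_t = -\mu_1^t \nabla f(\boldsymbol W_0) + \sum_{\tau=1}^{t}\mu_1^{t-\tau}(1-\mu_1)\boldsymbol \xi_\tau + \sum_{\tau=2}^{t} \mu_1^{t-\tau+1}\left(\nabla f(\boldsymbol W_{\tau-2}) - \nabla f(\boldsymbol W_{\tau-1})\right),
\end{align*}
where $\boldsymbol \xi_\tau := \boldsymbol G_\tau - \nabla f(\boldsymbol W_{\tau-1})$. Applying the triangle inequality and taking expectations bounds $\mathbb E\|\boldsymbol E_t\|_F$ by the sum of the three corresponding terms.

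The initialization term gives $\mu_1^t\|\nabla f(\boldsymbol W_0)\|_F$ directly.

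For the noise term, we use Jensen, $\mathbb E\|\cdot\|_F \le \sqrt{\mathbb E\|\cdot\|_F^2}$. The $\boldsymbol \xi_\tau$ form a martingale difference sequence with respect to the natural filtration, since $\boldsymbol W_{\tau-1}$ is measurable before sampling $\mathcal B_\tau$. Hence the cross terms vanish. By Assumption \ref{assum-3} and independence within the minibatch, $\mathbb E\|\boldsymbol \xi_\tau\|_F^2 \le \sigma^2/B$. Therefore the second moment is at most
\begin{align*}
(1-\mu_1)^2 \frac{\sigma^2}{B}\sum_{k\ge 0}\mu_1^{2k} = \frac{1-\mu_1}{1+\mu_1}\frac{\sigma^2}{B},
\end{align*}
which gives the first term of the bound.

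For the drift term, we use Assumption \ref{assum-2}, read in its standard gradient-Lipschitz form, to get $\|\nabla f(\boldsymbol W_{\tau-2}) - \nabla f(\boldsymbol W_{\tau-1})\|_F \le L\|\boldsymbol W_{\tau-1}-\boldsymbol W_{\tau-2}\|_F$. The grafted step in line 11 has norm $\eta\|\boldsymbol O_t\|_F\|\boldsymbol D_t\|_F/(\|\boldsymbol D_t\|_F+\epsilon) \le \eta\|\boldsymbol O_t\|_F \le \eta\sqrt n$, using $m\ge n$ and $\boldsymbol O_t$ having at most $n$ unit singular values. This is the same bound used in \eqref{descent-1}. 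Summing the geometric series $\sum_{k\ge1}\mu_1^k = \mu_1/(1-\mu_1)$ yields $\mu_1 L\eta\sqrt n/(1-\mu_1)$.

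The main obstacle is the noise term. One must check that the martingale structure legitimately kills the cross terms even though $\boldsymbol W_{\tau-1}$ depends on earlier noise. This follows by conditioning on the history up to $\tau-1$, under which $\mathbb E[\boldsymbol \xi_\tau] = \boldsymbol 0$. The lemma as stated lists only Assumption \ref{assum-2}, but Assumption \ref{assum-3} is also needed for this step. A second point is that the $L$-smoothness used must be the gradient-Lipschitz form rather than only the upper quadratic bound; I would invoke it in that form.
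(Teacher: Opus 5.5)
Your proof is correct and is essentially the paper's argument. The paper introduces the noise-free momentum $\boldsymbol C_t = \mu_1 \boldsymbol C_{t-1} + (1-\mu_1)\nabla f(\boldsymbol W_{t-1})$ with $\boldsymbol C_0 = \boldsymbol 0$, and splits the error as $\|\boldsymbol C_t - \boldsymbol M_t\|_F + \|\nabla f(\boldsymbol W_{t-1}) - \boldsymbol C_t\|_F$. These two pieces are exactly your noise term and your initialization-plus-drift terms, bounded the same way: Jensen with martingale orthogonality for the noise, and the gradient-Lipschitz bound with $\|\boldsymbol W_t - \boldsymbol W_{t-1}\|_F \le \eta\sqrt n$ and a geometric series for the drift. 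The only difference is that you unroll explicitly where the paper recurses. Your side remarks are also right: the paper's proof silently uses Assumption \ref{assum-3}, and it uses the gradient-Lipschitz form of smoothness, which the one-sided quadratic bound in Assumption \ref{assum-2} does not imply for nonconvex $f$.
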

\begin{proof}
The proof idea follows that of \cite{shen2025convergence} and the difference is that we consider the zero initialization. We construct $\boldsymbol C_t = \mu_1 \boldsymbol C_{t-1} + \left( 1-\mu_1\right) \nabla f \left(  \boldsymbol W_{t-1}\right),\forall t \geq 1$ and $\boldsymbol C_0 = \boldsymbol 0_{m \times n}$.  
\begin{align}
\mathbb E \left[  \left\|   \nabla f \left(\boldsymbol W_{t-1} \right) - \boldsymbol M_t   \right\|_F  \right]  & \leq \mathbb E \left[  \left\| \boldsymbol C_t  - \boldsymbol M_t \right\|_F  \right]   + \mathbb E\left[  \left\| \nabla f \left(  \boldsymbol W_{t-1}\right)  - \boldsymbol C_t \right\|_F\right] \nonumber \\
& = \mathbb E \left[   \left\| \left(1 - \mu_1 \right) \sum_{\tau=1}^t \mu_1^{t-\tau}\left( \nabla f \left( \boldsymbol W_{\tau-1}\right) - \boldsymbol G_\tau\right)    \right\|_F     \right] \nonumber \\
& \quad + \mathbb E \left[ \left\| \nabla f \left( \boldsymbol W_{t-1} \right) - \mu_1 \boldsymbol C_{t-1} - \left( 1- \mu_1 \right)\nabla f\left( \boldsymbol W_{t-1}\right) \right\|_F \right] \nonumber \\
& {\leq } \sqrt{ \frac{1-\mu_1}{1+\mu_1}}\frac{\sigma}{\sqrt B} + \mu_1 \mathbb E \left[ \left\|  \nabla f\left( \boldsymbol W_{t-2} \right)- \boldsymbol C_{t-1}\right\|_F \right]  \nonumber \\
& \quad + \mu_1 \mathbb E\left[  \left\| \nabla f\left( \boldsymbol W_{t-1} \right) - \nabla f\left( \boldsymbol W_{t-2} \right) \right\|_F \right] \nonumber \\
& \leq \sqrt{ \frac{1-\mu_1}{1+\mu_1}}\frac{\sigma}{\sqrt B}  + \mu_1^{t-1} \mathbb E \left[ \left\|\nabla f\left( \boldsymbol W_0\right) - \boldsymbol C_1  \right\|_F \right] + \frac{ \mu_1 L\eta \sqrt n }{1-\mu_1} \nonumber \\
& = \sqrt{ \frac{1-\mu_1}{1+\mu_1}}\frac{\sigma}{\sqrt B} + \mu_1^t \left\|  \nabla f\left( \boldsymbol W_0 \right)\right\|_F + \frac{ \mu_1 L\eta \sqrt n }{1-\mu_1},
\end{align}
where the second inequality follows the proof of Lemma A.4 of \cite{shen2025convergence}, and the last equality uses the $\boldsymbol C_1 = \left(1 - \mu_1 \right) \nabla f\left( \boldsymbol W_0 \right)$ due to $\boldsymbol C_0 = \boldsymbol 0_{m \times n}$. 
\end{proof}

\section{Simulation Setting}\label{app:simulation-setting}
We use a matrix quadratic regression problem to illustrate the effect of curvature anisotropy on Muon and the benefit of preconditioning. Specifically, we consider
\begin{align}\label{matrix-quad}
\min_{\boldsymbol X \in\mathbb R^{r\times r}}
f(\boldsymbol G)
:=
\frac{1}{2}
\left\|
\boldsymbol A\boldsymbol X\boldsymbol B^\top -
\boldsymbol C
\right\|_F^2 ,
\end{align}
where $n=50,r=5,\boldsymbol A,\boldsymbol B\in\mathbb R^{n\times r}$. In vectorized form, the Hessian with respect to $\operatorname{vec}(\boldsymbol X)$ is $\boldsymbol H_X =(\boldsymbol B^\top\boldsymbol B) \otimes (\boldsymbol A^\top\boldsymbol A),
$ and therefore its condition number satisfies $
\kappa_X = 
\operatorname{cond}(\boldsymbol H_X) = 
\operatorname{cond}(\boldsymbol B)^2
\operatorname{cond}(\boldsymbol A)^2 .$
To generate controlled-condition problems, we first construct a rank-$r$ matrix $\boldsymbol C
= \boldsymbol U^\star \boldsymbol S^\star \boldsymbol V^{\star^\top}$, where $\boldsymbol U_\star$ and $\boldsymbol V_\star$ are orthonormal factors and $\boldsymbol S^\star$ is a normalized random core matrix. We then set $\boldsymbol A =
\boldsymbol U^\star \boldsymbol R_1, \boldsymbol B = \boldsymbol V^\star \boldsymbol R_2$, where $\boldsymbol R_1,\boldsymbol R_2\in\mathbb R^{r\times r}$ are random matrices with prescribed condition number.
We test four condition numbers $\kappa_X \in \{10^1,10^2,10^3,10^4\}$.

For Figure~\ref{compare-1}, we use the exact polar version of Muon without nuclear-norm scaling. The optimizer used in Figure~\ref{compare-2} is the exact-Hessian preconditioned Muon update. For the quadratic objective above,  the exact whitening preconditioners are chosen as $\boldsymbol P = (\boldsymbol A^\top\boldsymbol A)^{-1/2},
\boldsymbol Q = (\boldsymbol B^\top\boldsymbol B)^{-1/2}$.
Thus, the update is 
\begin{align}
\boldsymbol X_{t+1}
=  \boldsymbol X_t  - \eta_t
(\boldsymbol A^\top\boldsymbol A)^{-1/2}
\operatorname{Orth}
\left(
(\boldsymbol A^\top\boldsymbol A)^{-1/2}
\nabla f\left( \boldsymbol X_t\right)
(\boldsymbol B^\top\boldsymbol B)^{-1/2}
\right)
(\boldsymbol B^\top\boldsymbol B)^{-1/2}.
\end{align} 

Both simulations use the same initialization $\boldsymbol X_0=\boldsymbol 0$, the same maximum number of iterations $T=5000$, and the same warmup-stable-decay learning rate schedule. The learning rate linearly warms up to $\eta_{\max}=10^{-3}$ during the first $500$ iterations, remains constant for $3500$ iterations, and then linearly decays to $\eta_{\min}=0$ during the last $1000$ iterations. For each setting, we report the optimality gap $f(\boldsymbol X_t)-f^\star$,
where $f^\star$ is computed using the closed-form least-squares solution. 

\begin{figure}[htbp]
\centering
\subfigure[Muon]{
\begin{minipage}[t]{0.5\linewidth}
\centering
\includegraphics[width=3.5in]{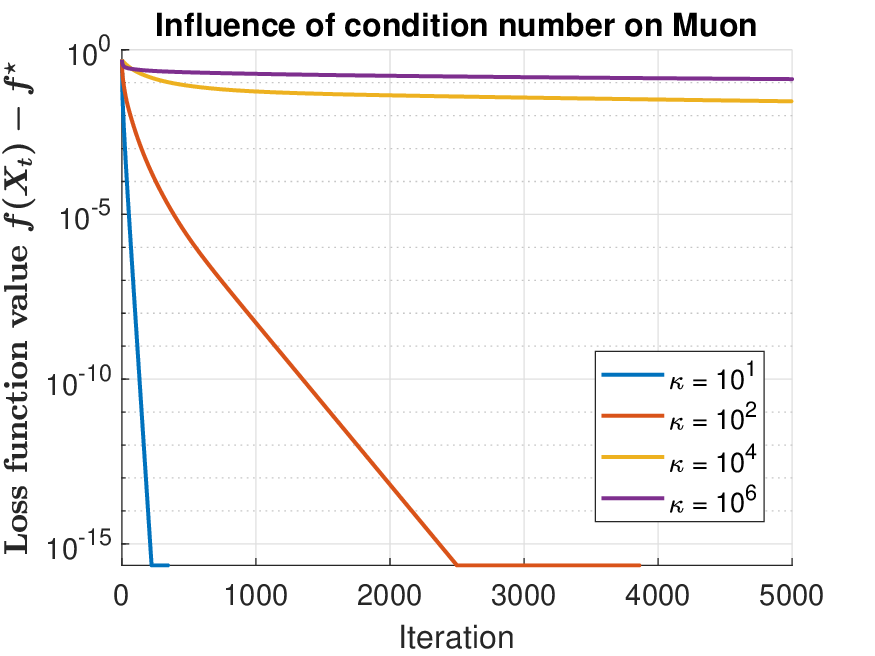}
\label{extra-1}
\end{minipage}%
}%
\subfigure[Preconditioned Muon]{
\begin{minipage}[t]{0.5\linewidth}
\centering
\includegraphics[width=3.5in]{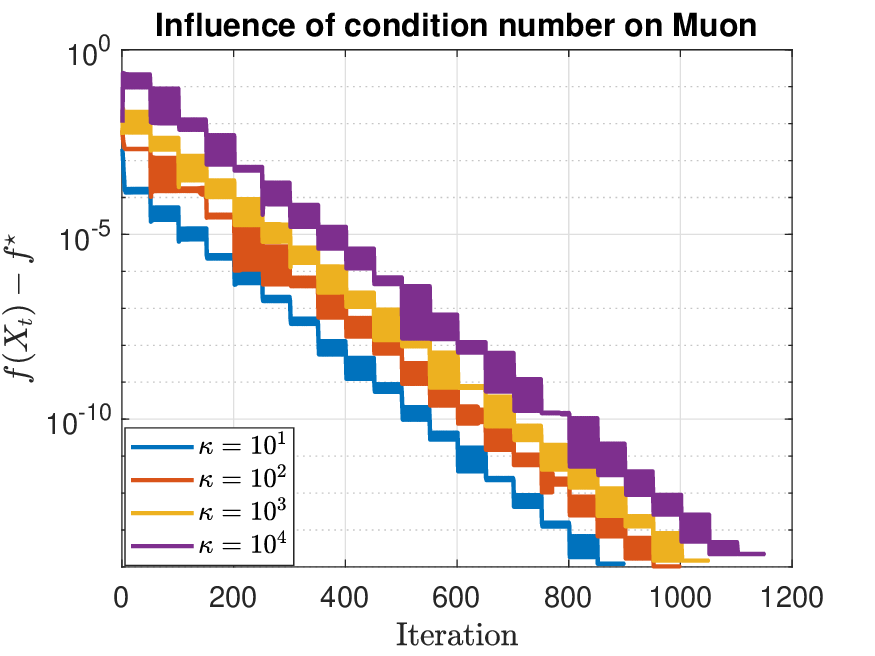}
\label{extra-2}
\end{minipage}%
}%
\centering
\caption{Additional step size settings for Muon-style updates.
(a) Nuclear-norm scaled polar update.
(b) Exact polar Muon with geometric step size decay.}
\label{fig:extra-stepsize}
\end{figure}

To further examine whether the observed sensitivity of Muon is caused by a particular learning rate schedule, we also test two additional learning rate schedules for Muon-style updates on the same matrix quadratic regression problem \eqref{matrix-quad}. The first setting uses a nuclear-norm scaled polar update \cite{lau2025polargrad} as
\begin{align}
\boldsymbol G_{t+1}
=
\boldsymbol G_t
-
\eta_t
\left\|
\nabla f(\boldsymbol G_t)
\right\|_*
\boldsymbol O_t,
\end{align}
which uses constant base step size $
\eta_t
=
\eta_0
=
\frac{C_\eta}{L_G r},
$, where $L_G$ is the largest eigenvalue of the Hessian with respect to $\operatorname{vec}(\boldsymbol G)$ and $C_\eta=2$.

The second setting uses an exact polar Muon direction with a periodically decayed geometric learning rate schedule, where the step size is periodically decayed as $
\eta_t = \eta_0
\rho^{\left\lfloor (t-1)/p \right\rfloor}$, where $p$ is the decay period and $\rho$ is the decay factor. In our implementation, we set $p=50$ and $\rho=0.5$. The initial stepsize is chosen according to the natural scale of the least-squares solution:$ \eta_0 = C_\eta \frac{\|\boldsymbol G_{\rm lsq}\|_F}{\sqrt r}$, where $C_\eta=0.5$ and $\boldsymbol G_{\rm lsq}$ is the closed-form least-squares solution.

Figure~\ref{fig:extra-stepsize} further reports Muon-style updates under these two learning rate schedules. In Figure~\ref{extra-1}, we can observe that the nuclear-norm scaled polar update is still clearly sensitive to the Hessian condition number: as $\kappa$ increases, the convergence becomes much slower. This observation is consistent with Theorem~3.2 in \cite{lau2025polargrad}, which predicts a slower convergence rate when the Hessian condition number becomes larger.  In contrast, Figure~\ref{extra-2} shows that Muon with geometric stepsize decay is much less sensitive to $\kappa$, as the curves exhibit similar staircase-like decreasing trends across different condition numbers.

These results suggest that the sensitivity of Muon to ill-conditioned curvature of loss \eqref{matrix-quad} depends on the learning rate schedule. We use the WSD schedule in Figure \ref{fig:compare-muon-precondition} because WSD is commonly used and studied in modern language-model training. Therefore, the main WSD-based comparison is more aligned with the training protocol used in deep learning and large-scale language-model pretraining.

\newpage
\section{LLM Pretraining Hyperparameter Settings}
\label{app:exp}

\begin{table}[htbp]
\centering
\renewcommand{\arraystretch}{1.2}
\begin{tabular}{l cccc} 
\toprule
\multirow{2}{*}{\textbf{Hyperparameters}} & \multicolumn{4}{c}{\textbf{Optimizers}} \\
\cmidrule(lr){2-5}
& \textbf{AdamW} & \textbf{Muon} & \textbf{MALT} & \textbf{MALTER} \\
\midrule

\multicolumn{5}{l}{\textbf{Small (124M)}} \\
\midrule
Batchsize and Accumulation & \multicolumn{4}{c}{(60, 8)} \\
Weight Decay               & \multicolumn{4}{c}{0.1} \\
Momentum ($\beta_1 / \mu_1, \beta_2 / \mu_2$) & (0.9, 0.95) & (0.95, ---) & (0.95, 0.99) & (0.95, 0.99) \\
Damping Constant ($\epsilon$)                 & ---         & ---         & $10^{-8}$    & $10^{-8}$ \\
\midrule

\multicolumn{5}{l}{\textbf{Medium (355M)}} \\
\midrule
Batchsize and Accumulation & \multicolumn{4}{c}{(40, 12)} \\
Weight Decay               & \multicolumn{4}{c}{0.1} \\
Momentum ($\beta_1 / \mu_1, \beta_2 / \mu_2$) & (0.9, 0.95) & (0.95, ---) & (0.95, 0.99) & (0.95, 0.99) \\
Damping Constant ($\epsilon$)                 & ---         & ---         & $10^{-8}$    & $10^{-8}$ \\
\midrule

\multicolumn{5}{l}{\textbf{Large (774M)}} \\
\midrule
Batchsize and Accumulation & \multicolumn{4}{c}{(20, 24)} \\
Weight Decay               & \multicolumn{4}{c}{0.1} \\
Momentum ($\beta_1 / \mu_1, \beta_2 / \mu_2$) & (0.9, 0.95) & (0.95, ---) & (0.95, 0.99) & (0.95, 0.99) \\
Damping Constant ($\epsilon$)                 & ---         & ---         & $10^{-8}$    & $10^{-8}$ \\

\bottomrule
\end{tabular}
\vspace{0.3cm}
\caption{Hyperparameter configurations for GPT-2 models across different optimizers.}
\label{tab:additional_settings}
\end{table}

\end{document}